\theoremstyle{definition}
\newtheorem{definition}{Definition}
\newtheorem{theorem}{Theorem}
\newtheorem{proposition}{Proposition}
\newtheorem{remark}{Remark}
\definecolor{mydarkblue}{rgb}{0,0.08,0.45}
\pgfplotsset{compat=1.18}
\tikzset{auto, >=stealth}
\tikzset{every edge/.append style={shorten >= 1pt}}
\tikzset{
    main node/.style={circle,draw,minimum size=1cm,inner sep=0pt},
}
\Crefname{algocfline}{Algorithm}{Algorithms}
\Crefname{algocf}{line}{lines}
\title{Joint Verification and Refinement of Language Models \\ for Safety-Constrained Planning}
\author{\name Yunhao Yang \email yunhaoyang234@utexas.edu \\
      \addr University of Texas at Austin
      \AND
      \name Neel P. Bhatt \email npbhatt@utexas.edu \\
      \addr University of Texas at Austin
      \AND
      \name William Ward \email wsw568@my.utexas.edu \\
      \addr University of Texas at Austin
      \AND
      \name Zichao Hu \email zichao@utexas.edu \\
      \addr University of Texas at Austin
      \AND
      \name Joydeep Biswas \email joydeepb@cs.utexas.edu \\
      \addr University of Texas at Austin
      \AND
      \name Ufuk Topcu \email utopcu@utexas.edu \\
      \addr University of Texas at Austin
      }
\begin{document}

\maketitle

\begin{abstract}

Large language models possess impressive capabilities in generating programs (e.g., Python) from natural language descriptions to execute robotic tasks. However, these generated programs often contain errors that violate externally given task specifications. Without an effective method to verify their correctness, the reliable deployment of language models in real-world systems is practically infeasible.
We develop a method that converts generated robot programs into an automaton-based representation and verifies them against task-relevant safety specifications. We establish a theorem that any arbitrary combination of the verified programs will also satisfy the safety specifications. Hence, the method eliminates the need to verify complex programs composed of multiple simpler ones, reducing computation complexity.
We then introduce an automated fine-tuning procedure that leverages verification outcomes for supervision. By applying the theorem, this procedure only requires training the model to generate safe sub-components, thereby improving training efficiency. Empirical results on robot applications show a 30 percent increase in the probability of generating specification-compliant programs, with training time reduced by half compared to fine-tuning on generating full programs. Code available: \href{https://tinyurl.com/safe-codegen}{https://tinyurl.com/safe-codegen}.

\end{abstract}
\section{Introduction}

As large language models (LLMs) have demonstrated significant potential in generating programs for solving robot tasks \cite{wang2023codet5, Li2022alphacode, Chen2021codex, CodeBotler}, the generated programs often fail to meet the externally provided task specifications, which may lead to severe consequences in safety-critical contexts. Existing approaches \cite{CodeBotler, Liu2023chatgpteval, Du2024classeval} verify the programs by empirically collecting and checking execution traces. Such empirical verification may miss corner cases that violate the specifications. Therefore, guaranteeing that the generated programs satisfy task specifications remains challenging.

Formal verification provides guarantees of compliance with specifications \cite{baier2008principles, MacConville2024pymodcheck, Shu2019MSVL, Clover}, but applying it to LLM-generated programs presents unique challenges. Unlike programs written with well-defined patterns, LLM-generated programs exhibit flexible structures and inconsistent variable naming. These factors increase the difficulty of \emph{program abstraction}---a key component of verification that expresses a program in a finite-state machine. Moreover, the size of such programs increases the state space, making verification computationally expensive.

We develop a program verification method that \textbf{(1)} verifies the high-level behaviors of LLM-generated programs against externally provided specifications, and \textbf{(2)} alleviates the computational complexity of verifying complex, long-horizon programs. Given a set of logical specifications (e.g., safety constraints) and an LLM-generated program (e.g., in Python), the method converts the program into an automaton that is amenable to formal verification tools, such as model checking. It \textit{enables verification of programs with flexible structures and varied lengths}. To address scalability, we establish a \textbf{compositional verification theorem}: if individual program components each satisfy the safety specification, then any arbitrary composition of these components also satisfies it. This theorem (illustrated in \cref{fig: compose-example}) modularizes the verification process, \textit{reduces the need for exhaustive, system-wide verification} while maintaining safety guarantees.

\begin{wrapfigure}{r}{0.55\textwidth}
    \centering
    \vspace{-20pt}
    \begin{tikzpicture}[
    scale=.7,
    node distance=2.2cm,
    thick,
    every node/.append style={transform shape},
]

\node[state,initial] (q0)
    at (0,0)
    {\shortstack{PV\\(forward)}};
\node[state] (q1)
    at (0,-3)
    {\shortstack{stop}};
\node[state] (q4)
    at (4,1)
    {\shortstack{PV\\(u-turn)}};
\node[state] (q5)
    at (7,1)
    {\shortstack{stop}};
\node[state] (q6)
    at (4,-3)
    {\shortstack{PV\\(right)}};
\node[state] (q7)
    at (7,-3)
    {\shortstack{stop}};

\path[->,sloped]

(q0) 
edge[loop above] node[above]
    { $\neg $ stop sign }
    ()
edge[] node[below]
    { stop sign }
    (q1)
edge[color=orange, bend right, dashed] node[below]
    { end of road }
    (q4)

(q1) 
edge[loop left] node[below]
    { pedestrian }
    ()
edge[color=orange, dashed] node[below]
    { $\neg$ pedestrian }
    (q6)
;

\path[->,sloped]

(q4) 
edge[loop above] node[above]
    { $\neg $ pedestrian}
    ()
edge[bend right] node[below]
    { pedestrian }
    (q5)
edge[color=orange, bend right, dashed] node[above]
    { $\neg$ end of road }
    (q0)

(q5) 
edge[bend right] node[above]
    { $\neg$ pedestrian }
    (q4)
;

\path[->,sloped]

(q6) 
edge[loop above] node[above]
    { $\neg $ pedestrian}
    ()
edge[bend right] node[below]
    { pedestrian }
    (q7)
edge[color=orange, bend left, dashed] node[below]
    { True }
    (q0)

(q7) 
edge[bend right] node[above]
    { $\neg $ pedestrian }
    (q6)
    
edge[loop right] node[below]
    { pedestrian}
    ()
;
\end{tikzpicture}
    
    \caption{A composed program of three subprograms. The states of each subprogram are connected in black transitions. The orange dashed transitions connect the subprograms. We establish a theorem that any combination of individually verified subprograms also satisfies the safety specifications.
    \vspace{-10pt}}
    \label{fig: compose-example}
\end{wrapfigure}


In addition, we introduce an automated fine-tuning procedure that leverages verification outcomes to improve the LLM’s ability to generate specification-compliant programs without requiring human annotations. The procedure treats programs that pass formal verification as positive training examples and updates the model parameters in a supervised manner. Importantly, the compositional verification theorem enhances fine-tuning efficiency: Instead of training the LLM to generate entire programs, we only need to fine-tune it to generate sub-components that meet safety specifications. Through this process, we achieve a \textbf{30 percent} probability of generating specification-satisfying programs with only 20 minutes of training---\textbf{halving the time} required compared to training for full program generation.

In summary, the contributions of this work are twofold:
\vspace{-4pt}
\begin{itemize}
    \item \textbf{Verification: } Develop a method to verify the safety of the LLM-generated programs, along with a theorem that ensures the safety of complex, long-horizon programs.
    \item \textbf{Learning: } Propose a fine-tuning procedure that improves specification compliance without human labels, achieving better performance with reduced training time by leveraging the compositional verification theorem.
\end{itemize}

\section{Related Work}

Many existing works develop methods to generate executable programs (e.g., C++ or Python) via language models \cite{svyatkovskiy2020intelliCode, fried2023InCoder, roziere2023codellama, nijkamp2023codegen, nijkamp2023codegen2, wang2023codet5, Li2022alphacode, Chen2021codex, Ahmad2021PLBART, progprompt}. However, these works lack verification of their generated programs and directly execute them, which is risky in safety-critical applications. 
The works \cite{CodeBotler, hu2024robo, Li2022alphacode, Chen2021codex, Liu2023chatgpteval, Hendrycks2021APPS, austin2021MBPP, Du2024classeval, Nguyen2022CoPilot, DBLP:conf/icml/Ni0RSYWL23} empirically verify generated programs against externally provided specifications. However, such empirical tests may miss edge cases, which still pose a safety risk. In contrast, our proposed method provides formal guarantees, ensuring the program satisfies given specifications in all possible scenarios, including all the edge cases.

Traditional program verification methods can verify program behaviors via model checking \cite{Nelson1980verification, Hoare1969axioms, Pnueli1977temporallogic, Clarke2018modelchecking, Vardi1986automata, kurshan2000program, Farias2024ESBMCpython} or transform programs into formal languages to constrain the values of variables or check runtime errors, e.g., dividing by 0 \cite{MacConville2024pymodcheck, Shu2019MSVL, Clover}. 
Meanwhile, recent work has made progress in verifying high-level plans expressed in natural language \cite{lang2ltl, yang2024aamas, yang-mlsys}. 
However, these methods are incapable of verifying LLM-generated programs with unconventional constructs, inconsistent naming, or non-modular logic, and are computationally expensive when verifying long-horizon programs.

\section{Problem Formulation}
\glsresetall

Consider a system $\mathcal{S} = (S, E, AP_S, AP_E, \Phi)$ provided by a system designer, where 
\begin{itemize}
    \item $S$ is a set of \emph{subscribing functions} (API calls) receiving and extracting environment or system information. Each subscribing function $f_s \in S$ takes inputs from text space $\mathcal{T}$ (a set of all possible texts) and returns a Boolean value, i.e., $f_s: \mathcal{T} \rightarrow \{0,1\}$.
    \item $E$ is a set of \emph{execution functions} that publish actions for the system to execute. 
    \item $AP_S$ is a set of atomic propositions corresponding to $S$. Each function $f_s \in S$ corresponds to a proposition in $AP_S$.
    \item $AP_E$ is a set of atomic propositions corresponding to functions in $E$.
    \item $F_C: S\cup E \rightarrow AP_S \cup AP_E$ maps a function (with its input and output) to a corresponding atomic proposition.
    \item $\Phi$ is a set of \emph{safety specifications} over $AP_S$ and $AP_E$.
\end{itemize}
A safety specification is a temporal logic formula \cite{rescher2012temporal} asserts that ``bad things'' never happen during a system's execution. 

Let $P$ be a language model-generated program formally defined as:
\begin{definition}
    \label{def: program}
    A \textsc{program} $P$ is a computer program describing a set of function sequences. Each sequence $f_1 f_2 ...$ consists of functions $f_i \in S \cup E$ for $i = 1,2,...$.
\end{definition}
We show some generated programs in Section \ref{sec: outdoor}. Then, we can formulate our problem.

\noindent\textbf{Problem 1: }
Given a system $\mathcal{S} = (S, E, AP_S, AP_E, F_C, $ $ 
\Phi)$ and a program $P$, \textbf{formally verify} whether $P$ satisfies the safety specifications $\Phi$. 

Once we can verify a single program against safety specifications, we want to prove that any combination of verified programs will still satisfy the safety specifications. 
Let $\{P_i\}_{i=1}^m$ be a set of $m$ programs. We can combine these programs to solve complex tasks.
\begin{definition}
    A \textsc{composed program $\mathcal{C}_p$ of $\{P_i\}_{i=1}^m$} is a sequence of programs $P^C_1P^C_2P^C_3...$, $\forall_{j \in \mathbb{N}} \ P^C_j \in \{P_i\}_{i=1}^m$.
\end{definition}
We show an example of a composed program $\mathcal{C}_p$ in Fig. \ref{fig: compose-example}.

A composed program $\mathcal{C}_p$ describes a set of function sequences, where each sequence is a concatenation of sequences described by programs in $P^C_1P^C_2P^C_3...$. For example, if $f_1 f_2$ and $f_a f_bf_c$ are sequences described by $P^C_1$ and $P^C_2$, respectively, then $f_1f_af_2f_b...$ is in $\mathcal{C}_p$. \textbf{Note} that a composed program does not need to complete $P^C_1$ and then transit to the beginning of $P^C_2$. Instead, it can halt $P^C_1$ at any point and transit to any point at $P^C_2$. 

\noindent\textbf{Problem 2: }  Given a system $\mathcal{S} = (S, E, AP_S, AP_E, F_C, $ $\Phi)$, let $\mathcal{C}_p$ be a composed program of $\{P_i\}_{i=1}^m$, prove the following statement:
\begin{center}
    \emph{If every program in $\{P_i\}_{i=1}^m$ satisfies $\Phi$, then the composed program $\mathcal{C}_p$ of $\{P_i\}_{i=1}^m$ also satisfies $\Phi$.}
\end{center}

\section{Methodology}

\begin{figure*}[t]
    \centering
    \includegraphics[width=0.9\linewidth]{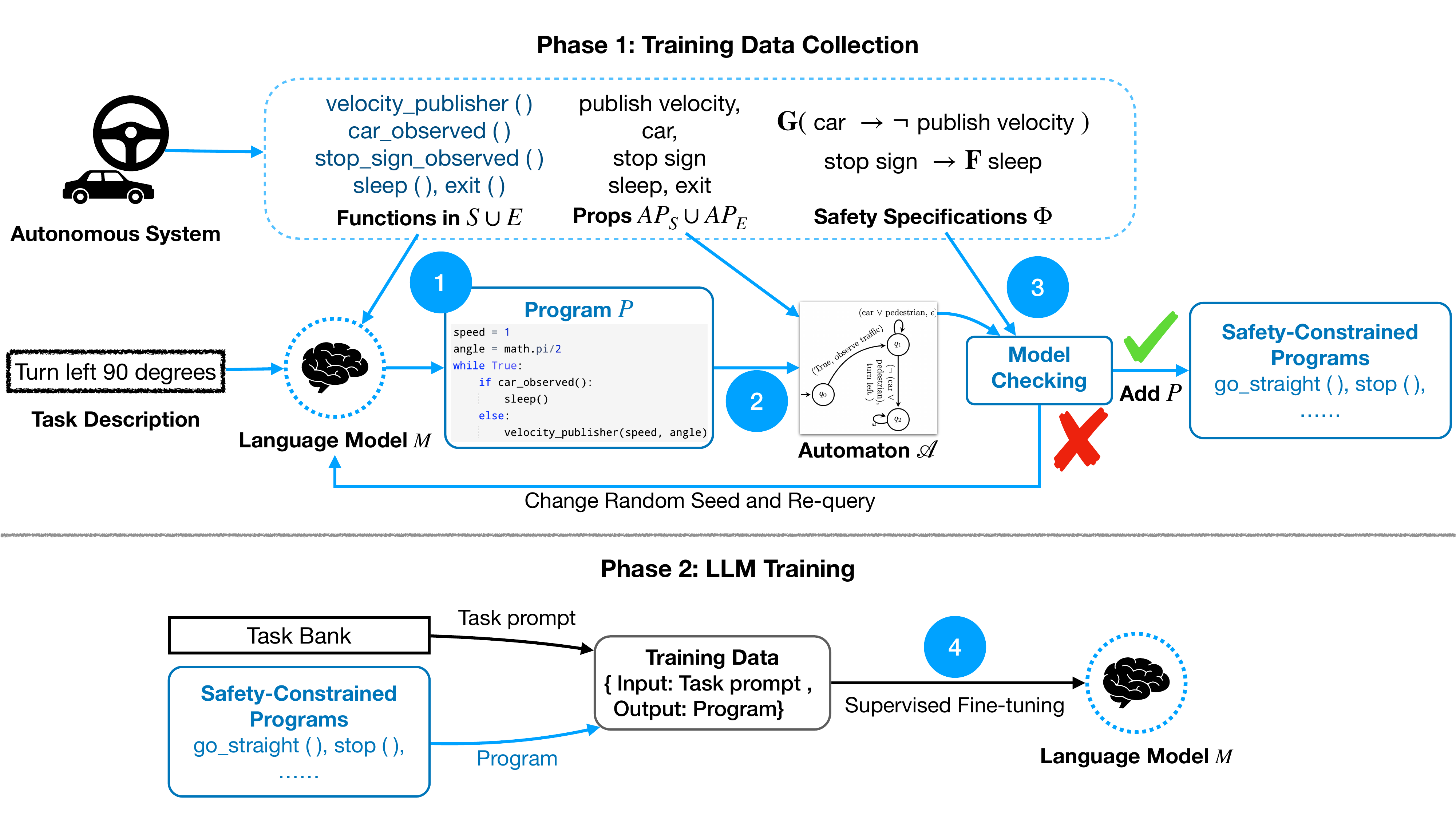}
    \caption{In this pipeline, we verify each LLM-generated program and add the specification-compliant programs to a set of \emph{safety-constrained programs}, which will be used for fine-tuning the LLM.
    \vspace{-15pt}}
    \label{fig: pipeline}
\end{figure*}

We present a method for verifying and improving LLM-generated robot programs to satisfy externally provided safety specifications. The method consists of three key components: \textbf{(1)} converting the generated program into an automaton-based representation suitable for formal verification, \textbf{(2)} applying a compositional verification theorem to scale verification across program components, and \textbf{(3)} fine-tuning the language model using verified subprograms to improve specification compliance. 

\subsection{Program Verification Against Safety Specifications}
\label{sec: construction}
Since a program is not formally verifiable directly, we transform it into a verifiable form and then verify the transformed program against logic-based safety specifications in $\Phi$.

First, we construct a \emph{transition system} $TS = (Q_{s}, T_{s}, L_{s})$ representing the task environment. 
\begin{definition}
\label{def: transition-system}
    A \textbf{\textsc{transition system}} $TS = (Q_{s}, T_{s}, L_{s})$ is a tuple of a set of states $Q_{s}$, a set of transitions $T_{s} = \{(q_i, q_j)\ |\ $ $ q_i, q_j \in Q_{s}\}$, i.e., $(q_i, q_j)$ means a transition from state $q_i$ to $q_j$, and a label function $L_{s}: Q_{s} \rightarrow 2^{AP}$.
\end{definition}
$AP$ is a set of atomic propositions. Each atomic proposition has a truth value---true or false---but does not contain any logical connectives like "and," "or," "not," etc. This system builds transitions between every conjunction of the truth values of propositions in $AP_S$.

Next, we construct an automaton that represents each program.
\begin{definition}
\label{def: automaton}
    A \textbf{finite-state automaton (FSA)} $\mathcal{A} = (Q_a, p_0, $ $ T_a, L_a)$ is a tuple consisting of a set of states $Q_a$, an initial state $p_0$, a set of transitions $T_a = \{(p_i, \sigma, p_j) \ |\ p_i, p_j \in Q_{a}, \sigma \in 2^{AP}\}$, and a label function $L_a: Q_a \rightarrow 2^{AP}$.
\end{definition}
Consider a system $\mathcal{S}$, a program $P_i$, and a transition system $TS$. We construct an FSA $\mathcal{A}$ such that: For every function sequence $f_1 f_2,...$ described by $P_i$, there is a corresponding sequence $F_C(f_1) F_C(f_2)...$ described by $\mathcal{A}$.

To build the FSA, we follow the three steps below:

\noindent 1) Parse $P_i$ into an abstract syntax tree (AST).

\noindent 2) Define a keyword processor that maps an AST with predefined keywords and specified structures to an FSA, as presented in Table \ref{tab: grammar}.

\noindent 3) Follow Algorithm \ref{algo:tree2fsa} to build an FSA. Note that the algorithm takes the root of AST and the keyword processor as inputs and returns an FSA $\mathcal{A} = (Q_a, p_0, T_a, L_a)$.

Once the FSA $\mathcal{A}$ is constructed, we formulate a \emph{product automaton} by implementing the FSA in the transition system $TS$, denoted as $\mathcal{P} = \mathcal{A} \otimes TS$.
\begin{definition}
\label{def: product}
    Given an FSA $\mathcal{A}$ and a transition system $TS$, a \textbf{\textsc{product automaton}} $\mathcal{P}$ of $\mathcal{A}$ and $TS$, denoted $\mathcal{P} = \mathcal{A} \otimes TS$, is a tuple $(Q, Q_0, T, L)$, where
    
    $Q = \{(p, q) \ |\ p\in Q_a, q \in Q_s\}$, $Q_0 = \{p_0\} \times Q_s$, 
    $\quad T = \{((p, q), (p', q')) \ |\ p \in Q_a, q \in Q_s, (p, L_s(q), p') \in T_a, (q, q') \in T_s \}$, 
    $\quad$ and $L((p, q)) = L_a(p) \cup L_s(q), \text{ where } p \in Q_a, q \in Q_s$.
\end{definition}
In the product automaton $\mathcal{P} = (Q, Q_0, T, L)$, 
\begin{itemize}
    \item a \textbf{\textsc{prefix}} is a finite sequence of states starting from $(p_0, q_0) \in Q_0$, e.g., $(p_0, q_0)(p_1, q_1)(p_2, q_2)...(p_k, q_k)$, $k$ is the prefix length,
    \item a \textbf{\textsc{trace}} $\phi$ is a sequence of labels $L((p_0,q_0))L((p_1,q_1))\dots$, where Traces($\mathcal{P}$) denotes the set of all traces from $\mathcal{P}$. In words, Traces($\mathcal{P}$) captures all possible behaviors of $\mathcal{A}$ in the task environment represented by $TS$.
\end{itemize}

Finally, we use a model checker \cite{Cimatti2002NuSMV} to verify whether the product $\mathcal{P}$ satisfies all the specifications $\Phi$. If a program's automaton satisfies all the specifications, we add this program to a set of \emph{safety-constrained programs}. We present an illustration of this procedure in \cref{fig: pipeline} Phase 1.

\begin{algorithm}[t]
  \caption{AST to \Gls{aut}}
  \begin{algorithmic}[1]
    \Procedure{\textbf{Tree2\gls{aut}}}{root, keywords, keyword\_processor} \Comment{\emph{keywords} is a set of predefined words, \emph{keyword\_processor} is a function}
    
    \State $Q_a, T_a, L_a$ = [], [], []
    \State create an initial state $p_0$, $Q_a$.add($p_0$), $L_a(p_0) = \emptyset$
    \State $p_{current} = p_0$ \Comment{keep track of the current state}
    \For{node in root.children}
        \If{(every node in node.children is leaf) | (node.children[0] in keywords)}
            \State $\tilde Q, \tilde p_0, \tilde T, \tilde L$ = keyword\_processor(node)
        \Else
            \State $\tilde Q, \tilde p_0, \tilde T, \tilde L$ = \textbf{Tree2\gls{aut}}(node, keywords, keyword\_processor) \Comment{Preorder Traversal}
        \EndIf
        \State $Q_a += \tilde Q, T_a += \tilde T, L_a += \tilde L$ \Comment{merge the sub-automaton}
        \State $T_a$.add($(p_{current}, True, \tilde p_0)$) 
        \State $p_{current} = \tilde p_0$
    \EndFor
    
    \State \textbf{return} $Q_a, p_0, T_a, L_a$ 
    \EndProcedure
  \end{algorithmic}
  \label{algo:tree2fsa}
\end{algorithm}

\begin{table}[t]
\centering
\begin{tabular}{m{0.25\linewidth} m{0.3\linewidth} m{0.35\linewidth}}
\hline
AST & \gls{aut} & Note\\
\hline
\vspace{0.1cm}
\begin{tikzpicture}[thick,scale=.6, node distance=2.2cm, every node/.style={transform shape}]
	\node[initial, state] (0) at (0, 0) {root};
	\node[state] (1) at (0, -1.5) {\textbf{while}};
	\node[state] (2) at (-1, -2.5) {\Large $f_s$};
        \node[state] (3) at (1, -2.5) {\Large $f_e$};

  	\draw[->, shorten >=1pt] (0) to[right] node[below, align=center, sloped] { } (1);
        \draw[->, shorten >=1pt] (1) to[bend right] node[below, align=center, sloped] { } (2);
        \draw[->, shorten >=1pt] (1) to[bend left] node[below, align=center, sloped] { } (3);
\end{tikzpicture} & \vspace{0.2cm}\begin{tikzpicture}[thick,scale=.6, node distance=2.2cm, every node/.style={transform shape}]
	\node[initial, state] (0) at (0, 0) {\Large $\emptyset$};
	\node[state] (2) at (0, -2) {\Large $\omega$};

        \draw[->, shorten >=1pt] (0) to[bend left] node[above, align=center, sloped] {\Large $\sigma$} (2);
        \draw[->, shorten >=1pt] (2) to[bend left] node[below, align=center, sloped] {\Large $\neg \sigma$} (0);
        \draw[->, shorten >=1pt] (0) to[loop right] node[below, align=center, sloped] {\Large $\neg \sigma$} ();
        \draw[->, shorten >=1pt] (2) to[loop right] node[below, align=center, sloped] {\Large $\sigma$} ();
\end{tikzpicture} & $\sigma = F_C(f_s), \ \omega=F_C(f_e), \ f_s \in S, f_e \in E$. ``For loop'' can be expressed by ``while loop.'' \\

\hline
\vspace{0.1cm}
\begin{tikzpicture}[thick,scale=.6, node distance=2.2cm, every node/.style={transform shape}]
	\node[initial, state] (0) at (0, 0) {root};
	\node[state] (1) at (0, -1.5) {\textbf{if}};
	\node[state] (2) at (-1, -2.5) {\Large $f_s$};
        \node[state] (3) at (1, -2.5) {\Large $f_e$};

  	\draw[->, shorten >=1pt] (0) to[right] node[below, align=center, sloped] { } (1);
        \draw[->, shorten >=1pt] (1) to[bend right] node[below, align=center, sloped] { } (2);
        \draw[->, shorten >=1pt] (1) to[bend left] node[below, align=center, sloped] { } (3);
\end{tikzpicture} & \vspace{0.2cm}\begin{tikzpicture}[thick,scale=.6, node distance=2.2cm, every node/.style={transform shape}]
	\node[initial, state] (0) at (0, 0) {\Large $\emptyset$};
	\node[state] (2) at (0, -2) {\Large $\omega$};

        \draw[->, shorten >=1pt] (0) to[bend left] node[above, align=center, sloped] {\Large $\sigma$} (2);
        \draw[->, shorten >=1pt] (2) to[bend left] node[above, align=center, sloped] {\Large $True$} (0);
        \draw[->, shorten >=1pt] (0) to[loop right] node[below, align=center, sloped] {\Large $\neg \sigma$} ();
\end{tikzpicture} & $\sigma, \omega=F_C(f_s), F_C(f_e), $ $ f_s \in S, f_e \in E$. \\
\hline
\vspace{0.1cm}
\begin{tikzpicture}[thick,scale=.6, node distance=2.2cm, every node/.style={transform shape}]
	\node[initial, state] (0) at (0, 0) {root};
	\node[state] (1) at (-1, -1.5) {\textbf{if}};
	\node[state] (2) at (-1.5, -3) {\Large $f_s$};
        \node[state] (3) at (-0.5, -3) {\Large $f_{e1}$};

        \node[state] (4) at (1, -1.5) {\textbf{else}};
        \node[state] (5) at (1, -3) {\Large $f_{e2}$};

  	\draw[->, shorten >=1pt] (0) to[right] node[below, align=center, sloped] { } (1);
        \draw[->, shorten >=1pt] (0) to[right] node[below, align=center, sloped] { } (4);
        \draw[->, shorten >=1pt] (1) to[bend right] node[below, align=center, sloped] { } (2);
        \draw[->, shorten >=1pt] (1) to[bend left] node[below, align=center, sloped] { } (3);
        \draw[->, shorten >=1pt] (4) to[bend left] node[below, align=center, sloped] { } (5);
\end{tikzpicture} & \vspace{0.2cm}\begin{tikzpicture}[thick,scale=.6, node distance=2.2cm, every node/.style={transform shape}]
	\node[initial, state] (0) at (0, 0) {\Large $\emptyset$};
	\node[state] (1) at (-1, -3) {\Large $\omega_1$};
	\node[state] (2) at (1, -3) {\Large $\omega_2$};

        \draw[->, shorten >=1pt] (0) to[bend left] node[above, align=center, sloped] {\Large $\neg \sigma$} (2);
        \draw[->, shorten >=1pt] (2) to[ left] node[above, align=center, sloped] {\Large $True$} (0);
        \draw[->, shorten >=1pt] (0) to[bend right] node[above, align=center, sloped] {\Large $\sigma$} (1);
        \draw[->, shorten >=1pt] (1) to[ right] node[above, align=center, sloped] {\Large $True$} (0);
\end{tikzpicture} & $\sigma, \omega_1, \omega_2 = F_C(f_s), $ $F_C(f_{e_1}), F_C(f_{e_2})$. For ``if-elif-else,'' we duplicate the ``if'' node and replace it with ``elif.'' \\
\hline
\vspace{0.1cm}
\end{tabular}
\caption{
A subset of rules to convert abstract syntax trees to \gls{aut}-based representations. We present the complete rules in the Appendix.
}
\label{tab: grammar}
\vspace{-20pt}
\end{table}

\subsection{Safety of Composed Program}
\label{sec: composition}

As verifying long-horizon programs directly can be computationally expensive, we propose a \emph{compositional verification theorem} that allows the safety of complex programs to be inferred from their components. Specifically, if each subprogram satisfies the safety specification in isolation, then their composition also satisfies the specification under mild assumptions. An example of a composed program is in \cref{fig: compose-example}.

To establish and prove the theorem, we first need to define the terminology \emph{safety}. Let $\phi \in \Phi$ be a temporal logic formula. We call $\phi$ a \emph{safety specification} if it describes a \emph{safety property} \cite{baier2008principles} as defined in \cref{def: safe}.
\begin{definition}
\label{def: safe}
    A \textbf{\textsc{safety property}} $\psafe$ is a set of traces in $(2^{AP})^\omega$ ($\omega$ means infinite repetitions) such that for all traces $\psi \in (2^{AP})^\omega \backslash \psafe$, there is a finite-length prefix $\hat{\psi}$ such that
    \begin{center}
        $
        \psafe \cap \{\psi \in(2^{AP})^\omega \ |\ \hat{\psi} \, \mathrm{is\, a\, prefix\, of\, }\psi \} = \emptyset.
        $
    \end{center}
    $\hat{\psi}$ is a bad prefix, and $\mathrm{BadPref}(\psafe)$ denotes the set of all bad prefixes.
\end{definition}
From the definition of safety property, we derive the proposition below.
\begin{proposition}
    \label{prop: safety}
    Let $\phi$ describe safety property $\psafe$, an automaton $\mathcal{P}$ satisfies $\phi$ (denoted as $\mathcal{P} \models \phi$) if and only if Traces($\mathcal{P}$) $\subseteq \psafe$.
\end{proposition}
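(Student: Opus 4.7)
The plan is to prove both directions of the biconditional by unpacking the semantics of $\models$ for automata in terms of traces, and leveraging the bad-prefix characterization from \cref{def: safe}. I first note that for an automaton (or transition system), the standard semantics of $\mathcal{P} \models \phi$ is that every trace of $\mathcal{P}$ satisfies $\phi$, and that a trace $\psi \in (2^{AP})^\omega$ satisfies the safety formula $\phi$ exactly when $\psi \in \psafe$ (equivalently, when no finite prefix of $\psi$ lies in $\mathrm{BadPref}(\psafe)$). With these equivalences in hand, both directions become short.

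For the forward direction, I would assume $\mathcal{P} \models \phi$ and pick an arbitrary $\psi \in \mathrm{Traces}(\mathcal{P})$. By the semantics of $\models$, $\psi$ satisfies $\phi$, so $\psi \in \psafe$; since $\psi$ was arbitrary, $\mathrm{Traces}(\mathcal{P}) \subseteq \psafe$. For the converse, I would assume $\mathrm{Traces}(\mathcal{P}) \subseteq \psafe$ and argue by contradiction: if $\mathcal{P} \not\models \phi$, there exists some $\psi \in \mathrm{Traces}(\mathcal{P})$ that does not satisfy $\phi$, so $\psi \in (2^{AP})^\omega \setminus \psafe$. Applying \cref{def: safe}, $\psi$ has a finite prefix $\hat{\psi} \in \mathrm{BadPref}(\psafe)$, and in particular $\psi \notin \psafe$, which contradicts the containment assumption. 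Hence every trace of $\mathcal{P}$ satisfies $\phi$, i.e., $\mathcal{P} \models \phi$.

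The main obstacle, such as it is, is not a combinatorial one but a definitional one: the excerpt does not formally define the semantics of $\mathcal{P} \models \phi$ for an automaton, only the semantics of the safety property as a set of traces. I would therefore include one brief sentence at the start making explicit that $\mathcal{P} \models \phi$ is interpreted trace-wise (a standard convention for linear-time properties, consistent with the citation to \cite{baier2008principles}), after which the biconditional collapses to the two short arguments above. No induction on prefix length or structural induction on $\phi$ is needed, because the bad-prefix characterization in \cref{def: safe} already packages the only nontrivial content---namely, that membership in $\psafe$ is detectable along finite prefixes---into a form the proof can invoke directly.
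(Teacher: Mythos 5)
Your proof is correct. The paper itself gives no proof of this proposition---it is stated as a standard fact (the definition of linear-time property satisfaction from \cite{baier2008principles})---and your diagnosis is exactly right: once $\mathcal{P} \models \phi$ is read trace-wise and ``$\psi$ satisfies $\phi$'' is identified with ``$\psi \in \psafe$,'' the biconditional is essentially definitional. One small simplification: in your converse direction the appeal to \cref{def: safe} to extract a bad prefix is unnecessary, since $\psi \in (2^{AP})^\omega \setminus \psafe$ already contradicts $\mathrm{Traces}(\mathcal{P}) \subseteq \psafe$ directly.
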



\begin{wrapfigure}{r}{0.45\textwidth}
    \centering
    \vspace{-20pt}
    \begin{tikzpicture}[
    scale=.7,
    node distance=2.2cm,
    thick,
    every node/.append style={transform shape},
]

\node[state,initial, color=blue] (q0)
    at (0,0)
    {\shortstack{$\omega_1$}};
\node[state, color=blue] (q1)
    at (0,-2)
    {\shortstack{$\omega_2$}};
\node[state, color=purple] (q4)
    at (2,-1)
    {\shortstack{$\omega_3$}};
\node[state, color=purple] (q5)
    at (4,-1)
    {\shortstack{$\omega_4$}};

\path[->,sloped]

(q0) 
edge[loop above, color=blue] node[above]
    { $\neg \sigma_1$ }
    ()
edge[bend right, color=blue] node[]
    { $\sigma_1$ }
    (q1)

(q1) 
edge[loop left, color=blue] node[above]
    { True }
    ()
edge[bend right, color=orange] node[]
    { True }
    (q4)
;

\path[->,sloped]

(q4) 
edge[loop above, color=purple] node[above]
    { $\neg \sigma_2$}
    ()
edge[bend right, color=purple] node[below]
    { $\sigma_2$ }
    (q5)

(q5) 
edge[bend right, color=purple] node[below]
    { True }
    (q4)
;
\end{tikzpicture}    
    \caption{An example of a joint automaton $\mathcal{P}^* = (Q_1 \cup Q_2, Q_{0_1}, T_1 \cup T_2 \cup T^*, L_1 \cup L_2)$ of $\mathcal{P}_1$ and $\mathcal{P}_2$. We mark $\mathcal{P}_1$ and $\mathcal{P}_2$ in {\color{blue} blue} and {\color{purple} purple}, and mark the transition in $T^*$ in {\color{orange} orange}.
    \vspace{-20pt}}
    \label{fig: joint-automaton}
\end{wrapfigure}

From \cref{sec: construction}, we obtain a set of safety-constrained programs where each program satisfies the safety specifications. To proceed with our theorem establishment, we introduce a terminology \emph{joint automaton} as defined in Definition \ref{def: joint-automaton}.
\begin{definition}
    \label{def: joint-automaton}
    Let $\mathcal{P}_1 = (Q_1, Q_{0_1}, T_1, L_1)$ and $\mathcal{P}_2 = (Q_2, Q_{0_2}, T_2, L_2)$ be two automata over the same set of atomic propositions. Consider a new set of transitions $T^*: \{ (q, q') \ | \ q\in Q_1, q' \in Q_{0_2} \}$ that transit from a subset of $\mathcal{P}_1$'s states to a subset of $\mathcal{P}_2$'s initial states. We define $\mathcal{P}^* = (Q_1 \cup Q_2, Q_{0_1}, T_1 \cup T_2 \cup T^*, L_1 \cup L_2)$ as a \textsc{joint automaton} of $\mathcal{P}_1$ and $\mathcal{P}_2$.
\end{definition}
\Cref{fig: joint-automaton} is a joint automaton of two automata.

Note that we can ``connect" a joint automaton of $\mathcal{P}_1$ and $\mathcal{P}_2$ with $\mathcal{P}_3$ to obtain a new joint automaton of the three automata. By repeating this procedure, we can get the joint automaton of any number of automata. Such a joint automaton is the representation of the composed program:
\begin{remark}
    \label{remark: compose-automaton}
    Let $\{P_i\}_{i=1}^m$ be a set of safety-constrained programs, $\{\mathcal{P}_i\}_{i=1}^m$ be the product automata corresponding to the programs. Let $\mathcal{C}_p$ be a composed program of the programs in $\{P_i\}_{i=1}^m$, then there exist a joint automaton $\mathcal{P}^*$ captures the behaviors of  $\mathcal{C}_p$.
\end{remark}

\begin{theorem}
    \label{thm: safe}
    \textbf{[Compositional Verification Theorem]}
    Given a safety property $\psafe$, two automata $\mathcal{P}_1 = (Q_1, Q_{0_1}, T_1, L_1)$ and $\mathcal{P}_2 = (Q_2, Q_{0_2}, T_2, L_2)$, let $\mathcal{P}^* = (Q_1 \cup Q_2, Q_{0_1}, T_1 \cup T_2 \cup T^*, L_1 \cup L_2)$ be a joint automaton of $\mathcal{P}_1$ and $\mathcal{P}_2$, assume
    
    1) $\mathcal{P}_1$ and $\mathcal{P}_2$ satisfy $\psafe$,
    
    2) for any prefix $\hat \psi \notin \text{BadPref}(\psafe)$, for any $(q,q') \in T^*$,
    \begin{equation}
    \label{eq: connect-trans}
        \hat \psi L_1(q) L_2(q') \notin \text{BadPref}(\psafe),
    \end{equation}

    then $\mathcal{P}^*$ satisfies $\psafe$.
\end{theorem}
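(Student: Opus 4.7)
The plan is to invoke Proposition \ref{prop: safety} to reduce $\mathcal{P}^* \models \psafe$ to the claim $\text{Traces}(\mathcal{P}^*) \subseteq \psafe$, and then for each trace $\psi \in \text{Traces}(\mathcal{P}^*)$ argue that no finite prefix of $\psi$ lies in $\text{BadPref}(\psafe)$, which by \cref{def: safe} is equivalent to $\psi \in \psafe$.

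I would first exploit the structure of $\mathcal{P}^*$: since $T^*$ only adds transitions from $Q_1$ to $Q_{0_2}$ while $T_1$ and $T_2$ stay inside $Q_1$ and $Q_2$, every trace $\psi$ of $\mathcal{P}^*$ either (a) never leaves $Q_1$, so $\psi \in \text{Traces}(\mathcal{P}_1)$ and assumption (1) gives $\psi \in \psafe$ immediately; or (b) takes exactly one $T^*$ transition $(q_{k-1}, q_k)$ at some step $k$, producing a decomposition $\psi = \alpha \beta$ with $\alpha = L_1(q_0) \cdots L_1(q_{k-1})$ a finite $\mathcal{P}_1$-prefix and $\beta = L_2(q_k) L_2(q_{k+1}) \cdots$ a trace of $\mathcal{P}_2$ starting from the initial state $q_k \in Q_{0_2}$.

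In case (b) I would induct on the prefix length $n$ to show that each prefix $\hat\psi_n$ of $\psi$ is not in $\text{BadPref}(\psafe)$. For $n \le k$, the prefix is an initial segment of a $\mathcal{P}_1$-path, which extends to a full trace of $\mathcal{P}_1$ lying in $\psafe$, so $\hat\psi_n$ is non-bad. For $n = k+1$, the prefix equals $\hat\psi_{k-1} L_1(q_{k-1}) L_2(q_k)$, and since $\hat\psi_{k-1}$ is non-bad by the previous case and $(q_{k-1}, q_k) \in T^*$, condition (2) of the theorem (\cref{eq: connect-trans}) applied with $\hat\psi = \hat\psi_{k-1}$ yields non-badness of $\hat\psi_{k+1}$ directly.

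I expect the main obstacle to be the remaining inductive case $n > k+1$, where the prefix $\alpha L_2(q_k) L_2(q_{k+1}) \cdots L_2(q_{n-1})$ extends past the junction into the $\mathcal{P}_2$-portion and condition (2), which only licenses appending the two labels at a $T^*$ transition, no longer directly applies. The plan is to combine two ingredients: non-badness of $\alpha L_2(q_k)$ from the previous case, and non-badness of every finite segment of $\beta$ (since $\beta \in \psafe$ by assumption (1) applied to $\mathcal{P}_2$); then invoke the closure of $\psafe$ in the Cantor topology on $(2^{AP})^\omega$ to exhibit an infinite extension of $\hat\psi_n$ lying in $\psafe$ and thereby certify non-badness. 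The delicate step is justifying this combination, because safety properties are not automatically preserved under prepending arbitrary sequences to safe tails; the argument has to leverage the specific structural certification that condition (2) provides at the junction to rule out any latent bad prefix introduced by the concatenation of the $\mathcal{P}_1$- and $\mathcal{P}_2$-segments.
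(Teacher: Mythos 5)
Your proof follows the same route as the paper's: decompose a trace of $\mathcal{P}^*$ at the (unique) $T^*$ transition into a $\mathcal{P}_1$-segment, the junction pair $L_1(q)L_2(q')$, and a $\mathcal{P}_2$-tail, then discharge the first part and the tail by assumption (1) and the junction by assumption (2). Your case (a) and your induction on prefix length up through $n=k+1$ are correct and are in fact spelled out more carefully than in the paper, which does not separately treat traces that never leave $Q_1$.

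The obstacle you identify at $n>k+1$ is a genuine gap, and your proposed repair does not close it. Condition \eqref{eq: connect-trans} only certifies prefixes ending exactly one symbol into the $\mathcal{P}_2$-portion; it says nothing about $\alpha\, L_2(q_k)\cdots L_2(q_n)$ for $n>k$, and non-badness of $\alpha L_2(q_k)$ together with $\beta\in\psafe$ does not imply non-badness of the longer prefix. Concretely, let $\psafe$ be ``no $a$ is ever followed at a later position by a $b$,'' let $\mathcal{P}_1$ produce only $\{a\}\emptyset^\omega$ (ending in a self-loop at $q$ with $L_1(q)=\emptyset$), and let $\mathcal{P}_2$ produce only $\emptyset\{b\}\emptyset^\omega$ from $q'$. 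Both automata satisfy $\psafe$, and appending $L_1(q)L_2(q')=\emptyset\,\emptyset$ to any non-bad prefix keeps it non-bad, so both hypotheses hold; yet the joint automaton emits $\{a\}\emptyset\emptyset\{b\}\emptyset^\omega\notin\psafe$. So the missing step is not merely delicate---it is false under the stated hypotheses, and the Cantor-closure argument cannot rescue it (closure converts ``all finite prefixes non-bad'' into ``the trace is safe,'' which is the direction you already have; it cannot manufacture a safe extension of a prefix that is already bad). You should know that the paper's own proof has exactly the same hole: it asserts that because $\psi_1 L_1(q)L_2(q')$ and $\psi_2$ each contain no bad prefix, their concatenation contains none, which silently assumes bad-prefix-freeness is preserved under concatenation. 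A correct version needs a stronger junction hypothesis, e.g.\ that $\hat\psi\, L_1(q)\,\hat\beta\notin\mathrm{BadPref}(\psafe)$ for every non-bad $\hat\psi$ and every finite label sequence $\hat\beta$ realizable in $\mathcal{P}_2$ from $q'$. In short, your attempt is as complete as the paper's and considerably more honest about where the argument breaks.
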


\begin{proof}
    Assume $\mathcal{P}^*$ does not satisfy $\psafe$, there exists a trace $\psi$ from $\mathcal{P}^*$ such that $\psi$ has a prefix $\hat \psi \in \text{BadPref}(\psafe)$.

    Let $\psi = \psi_1 L_1(q) L_2(q') \psi_2$ be a trace with the bad prefix, where $\psi_i \in \text{Traces}(\mathcal{P}_i), i\in [1,2]$ and $(q,q') \in T^*$.

    Since $\mathcal{P}_1$ satisfies $\psafe$, $\psi_1$ does not contain any bad prefix. Then, by the assumption of \cref{eq: connect-trans}, $\psi_1 L_1(q) L_2(q')$ does not contain any bad prefix. Similarly, $\psi_2$ does not contain bad prefix because $\mathcal{P}_2$ satisfies $\psafe$.

    Therefore, $\psi$ does not have a bad prefix, which leads to a contradiction. Hence, we have proved that $\mathcal{P}^*$ satisfies $\psafe$.
\end{proof}

\begin{proposition}
    \label{prop: safe}
    Given a safety property $\psafe$, let $\mathcal{P}^*$ be a joint automaton of $\{\mathcal{P}_i\}_{i=1}^m$ such that
    \begin{itemize}
        \item all $\mathcal{P}_i, i \in [1,...,m]$ satisfy $\psafe$,
        \item for any prefix $\hat \psi \notin \text{BadPref}(\psafe)$, for any $(q,q')$ such that $q\in Q_x, q' \in Q_{0_y}, x \neq y, \quad$ \cref{eq: connect-trans} holds,
    \end{itemize} 
    then, $\mathcal{P}^*$ satisfies $\psafe$.
\end{proposition}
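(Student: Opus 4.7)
The plan is to prove Proposition \ref{prop: safe} by induction on $m$, the number of component automata, taking Theorem \ref{thm: safe} as the base case and invoking it once per inductive step. The case $m=1$ is immediate since the joint automaton is just $\mathcal{P}_1$, which satisfies $\psafe$ by hypothesis. The case $m=2$ is exactly Theorem \ref{thm: safe}: the connecting transitions $T^*$ go from $Q_1$ to $Q_{0_2}$ (so $x=1, y=2$, $x\neq y$), and hypothesis (2) of the proposition reduces verbatim to the second hypothesis of the theorem.

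For the inductive step, assume the proposition holds for any joint automaton built from $m-1$ components. I would first form the joint automaton $\mathcal{P}^{**}$ of $\{\mathcal{P}_i\}_{i=1}^{m-1}$ using only those transitions in $T^*$ whose endpoints both lie in $Q_1\cup\dots\cup Q_{m-1}$; call this restricted connecting set $T^{\mathrm{int}}$. By construction, $\mathcal{P}^{**}$ has state set $Q_1\cup\dots\cup Q_{m-1}$, initial states $Q_{0_1}$, transitions $T_1\cup\dots\cup T_{m-1}\cup T^{\mathrm{int}}$, and label function $L_1\cup\dots\cup L_{m-1}$ (which on $q\in Q_x$ coincides with $L_x(q)$). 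The hypotheses of the proposition, when restricted to this sub-collection, still hold: each $\mathcal{P}_i$ for $i\le m-1$ satisfies $\psafe$, and the prefix-preservation condition (\ref{eq: connect-trans}) for transitions in $T^{\mathrm{int}}$ is inherited because $T^{\mathrm{int}}\subseteq T^*$. Hence by the inductive hypothesis $\mathcal{P}^{**}\models\psafe$.

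Next, I would recognize that the original $\mathcal{P}^*$ is precisely the two-way joint automaton (in the sense of Definition \ref{def: joint-automaton}) of $\mathcal{P}^{**}$ and $\mathcal{P}_m$, with connecting set $T^{\mathrm{ext}}=\{(q,q')\in T^*:q\in Q_1\cup\dots\cup Q_{m-1},\ q'\in Q_{0_m}\}$. The label of any state $q\in Q_x$ under $\mathcal{P}^{**}$'s label function equals $L_x(q)$, so for any $\hat\psi\notin\mathrm{BadPref}(\psafe)$ and any $(q,q')\in T^{\mathrm{ext}}$, the expression $\hat\psi\,L_{\mathcal{P}^{**}}(q)\,L_m(q')$ equals $\hat\psi\,L_x(q)\,L_m(q')$, which avoids $\mathrm{BadPref}(\psafe)$ by hypothesis (2) of the proposition. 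Thus the two hypotheses of Theorem \ref{thm: safe} are satisfied for the pair $(\mathcal{P}^{**},\mathcal{P}_m)$, and applying it yields $\mathcal{P}^*\models\psafe$.

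The main obstacle I anticipate is bookkeeping rather than conceptual: I have to justify that the iterated join can be parenthesized as $(\cdots((\mathcal{P}_1\bowtie\mathcal{P}_2)\bowtie\mathcal{P}_3)\cdots)\bowtie\mathcal{P}_m$ while faithfully recovering the state set, initial-state set, transition set, and label function specified by the proposition's joint automaton. In particular, I must check that partitioning $T^*$ into $T^{\mathrm{int}}$ and $T^{\mathrm{ext}}$ is consistent with Definition \ref{def: joint-automaton}, which requires the connecting transitions of a two-way join to land in the initial states of the second component. This is why I take $T^{\mathrm{ext}}$ to be exactly those transitions into $Q_{0_m}$, and route all remaining cross-component transitions through $T^{\mathrm{int}}$ inside $\mathcal{P}^{**}$. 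Once this structural identification is pinned down, the induction closes cleanly by a single application of Theorem \ref{thm: safe}.
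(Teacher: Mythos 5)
Your proof is correct and follows essentially the same route as the paper: induction on the number of components, with Theorem \ref{thm: safe} handling both the base case and each inductive step. The paper's own argument is considerably terser (it does not spell out the partition of $T^*$ into internal and external connecting transitions or the verification that the theorem's hypotheses transfer to the pair $(\mathcal{P}^{**},\mathcal{P}_m)$), so your additional bookkeeping only makes the same induction more explicit.
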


\begin{proof}
    We prove \cref{prop: safe} by induction.

    Base case: the joint automaton of two automata satisfies $\psafe$, by \cref{thm: safe}.

    Inductive step: assume the joint automaton $\mathcal{P}^*$ of $m$ automata $\{\mathcal{P}_i\}_{i=1}^m$ satisfies $\psafe$. Consider a new joint automaton $\mathcal{P}^{**}$ of $\mathcal{P}^*$ and $\mathcal{P}_{m+1}$, where $\mathcal{P}_{m+1}$ also satisfies $\psafe$, by \cref{thm: safe}, $\mathcal{P}^{**}$ satisfies $\psafe$.

    By the theory of induction, we have proved \cref{prop: safe}.
\end{proof}

\begin{wrapfigure}{r}{0.50\textwidth}
    \centering
    \vspace{-20pt}
    \includegraphics[width=1\linewidth]{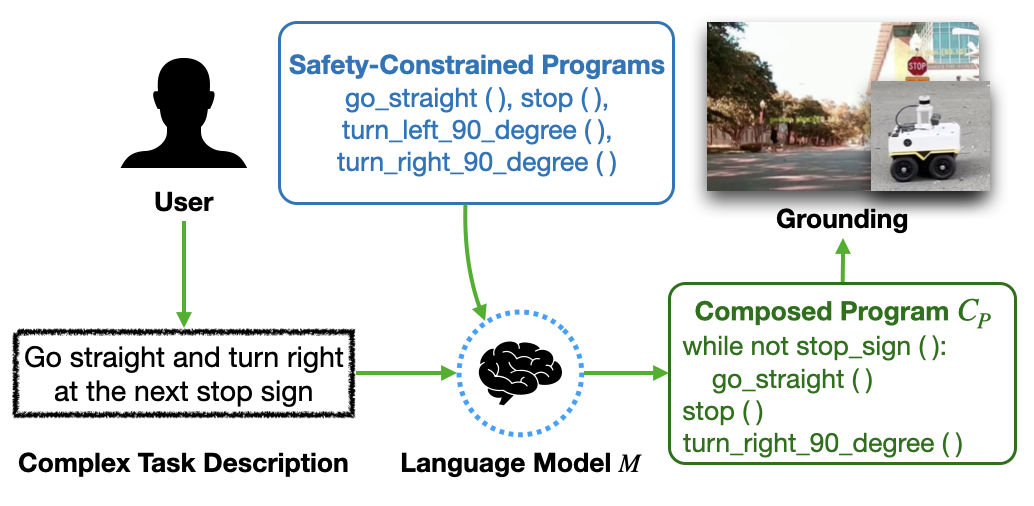}
    \caption{By Theorem \ref{thm: safe}, we can safely execute the composed program to solve complex tasks without further verification.
    \vspace{-20pt}}
    \label{fig: compose-pipeline}
\end{wrapfigure}

For any complex task that can be broken down into simpler sub-components, it is unnecessary to construct and verify an automaton for the overall program. Instead, \textbf{the safety of the complex task can be asserted if the simpler components from which it is composed are themselves safe}. This conclusion significantly reduces verification complexity.

\subsection{Verification-Guided Refinement}
\label{sec: refinement}

To improve the quality of LLM-generated programs, we introduce an automated fine-tuning procedure that uses verification outcomes as feedback. Unlike traditional supervised learning, which requires manually labeled data, our method relies entirely on formal verification results to supervise the training process. This enables the LLM to iteratively improve its ability to generate specification-compliant programs without human intervention.

The fine-tuning procedure works as follows:

(1) Given a set $\{t_1,t_2,...\}$ of task descriptions, query the language model $M$ to generate executable program $\{P_1,P_2,...\}$. We can get multiple programs with each task description by varying the random seeds.

(2) For each program $P_i$, construct an \gls{aut} $\mathcal{A}_i$ and verify it against the specifications $\Phi$. 

(3) If $A_i$ satisfies all the specifications, we add $P_i$ to the set of safety-constrained programs and formulate a  $(t_i, P_i)$ pair that consists of the program and its corresponding task description.

(4) Repeat 2 and 3 to obtain a set of $\{(t_i, P_i)\}_{i=1}^n$ pairs, which we considered as the training dataset with $n$ samples.

(5) Use the set of pairs as supervised training data to fine-tune the language model, as presented in \cref{fig: pipeline} Phase 2. We consider task descriptions $t_i$ as inputs and safety-constraint programs $P_i$ as labels.

Due to the compositional verification theorem, we only need to \textbf{verify and fine-tune on subprogram generation} (each safety-constrained program is a subprogram for a composed program), \textbf{rather than training for generating complete, long-horizon programs.} This reduces the computational cost of both verification and model training.
\begin{figure}[t]
    \centering
    \includegraphics[width=0.6\linewidth]{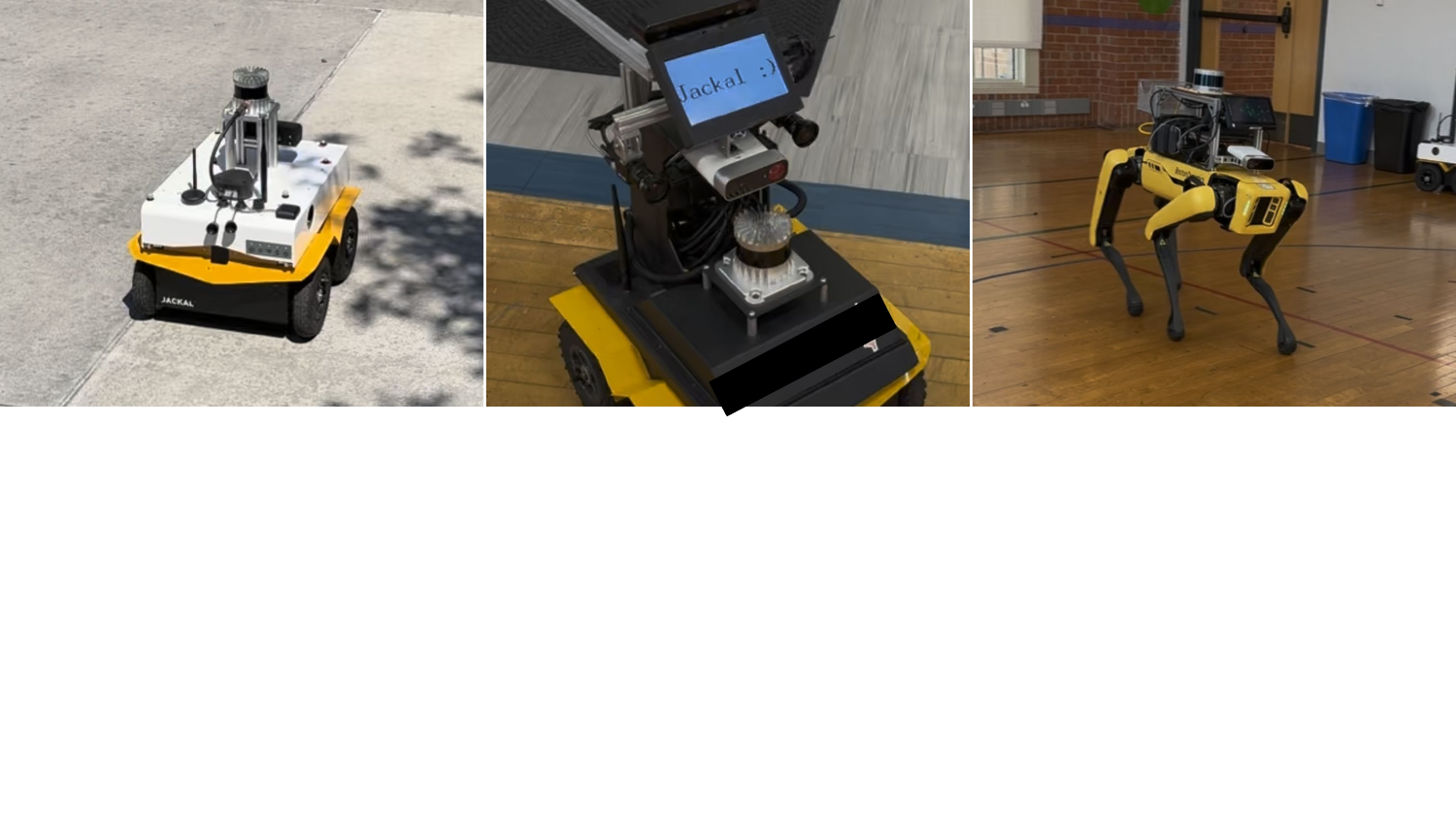}
    \caption{The three robots we used in the experiments. From left to right, we name them \emph{Jackal outdoor robot}, \emph{Jackal indoor robot}, and \emph{Spot robot dog}.}
    \label{fig: robots}
\end{figure}

\begin{figure}[t]
    \centering
    \includegraphics[width=0.6\linewidth]{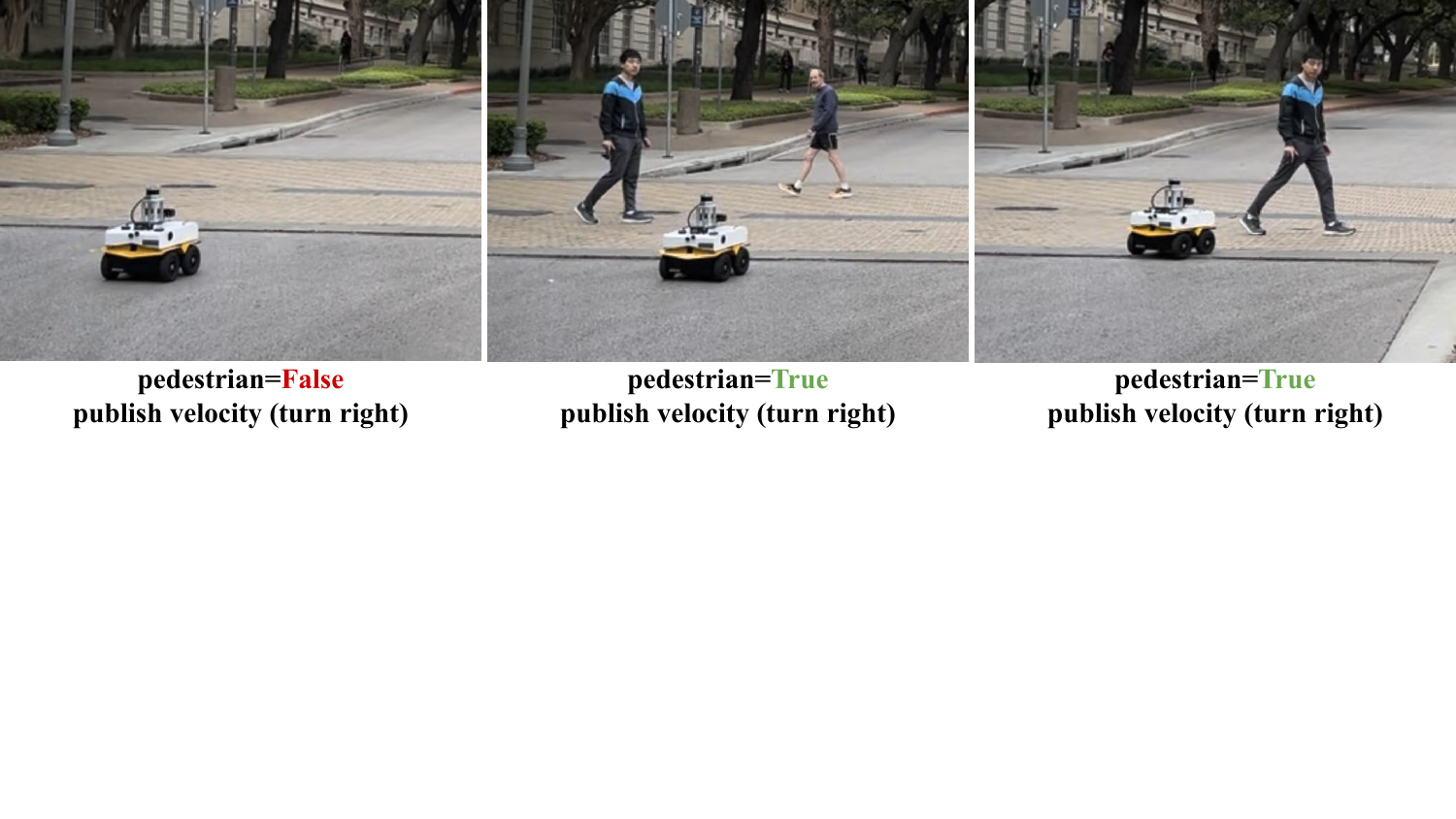}
    \includegraphics[width=0.6\linewidth]{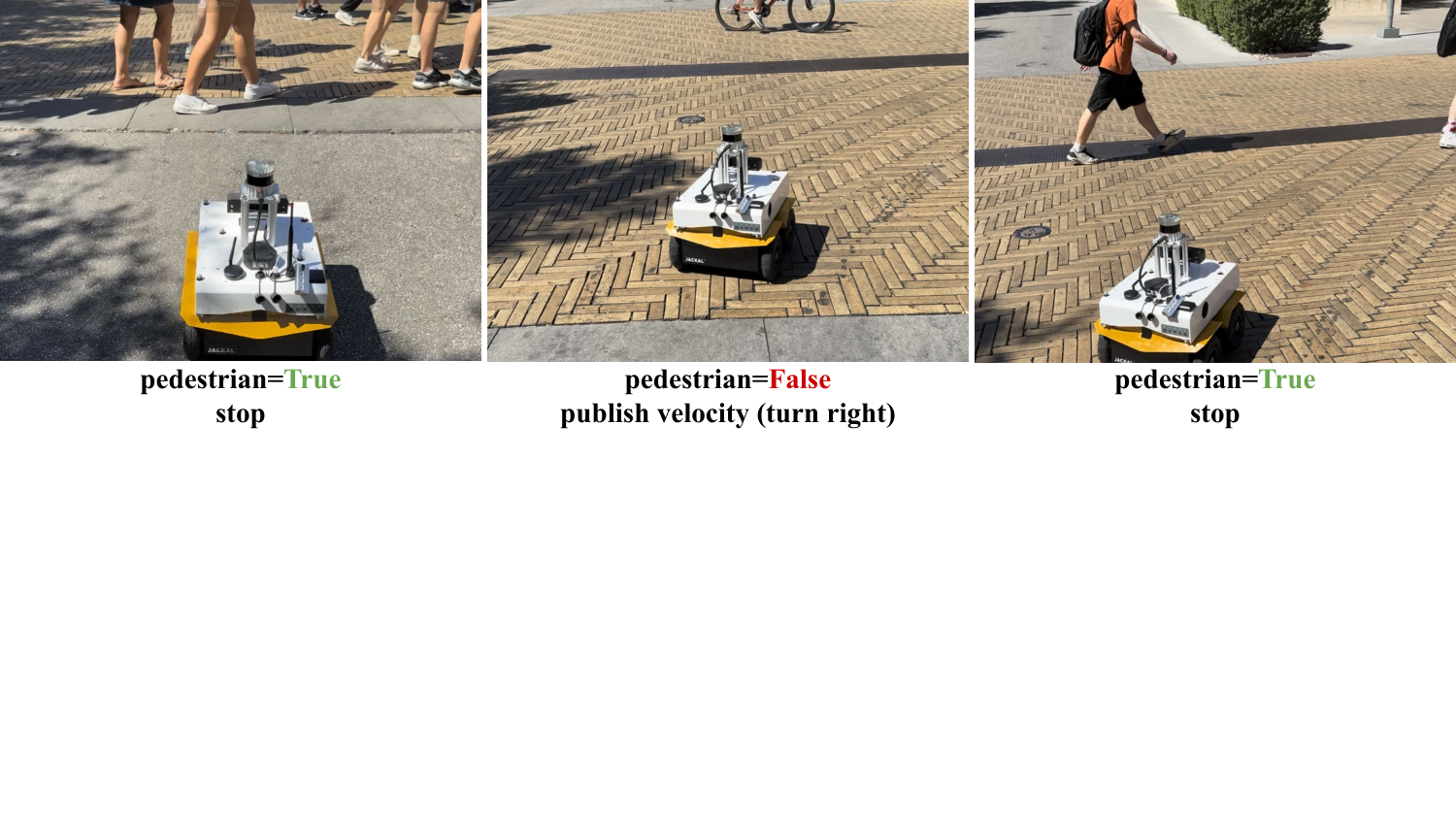}
    \caption{A failure example of executing the first program ``turn\_right\_90\_degrees\_1'' (top row) and a success example of executing the second program ``turn\_right\_90\_degrees\_2''(bottom row). The first program publishes velocity even if a pedestrian is observed, which violates the safety specification.}
    \label{fig: drive-demo}
\end{figure}

\begin{figure}[t]
    \centering
    \includegraphics[width=0.6\linewidth]{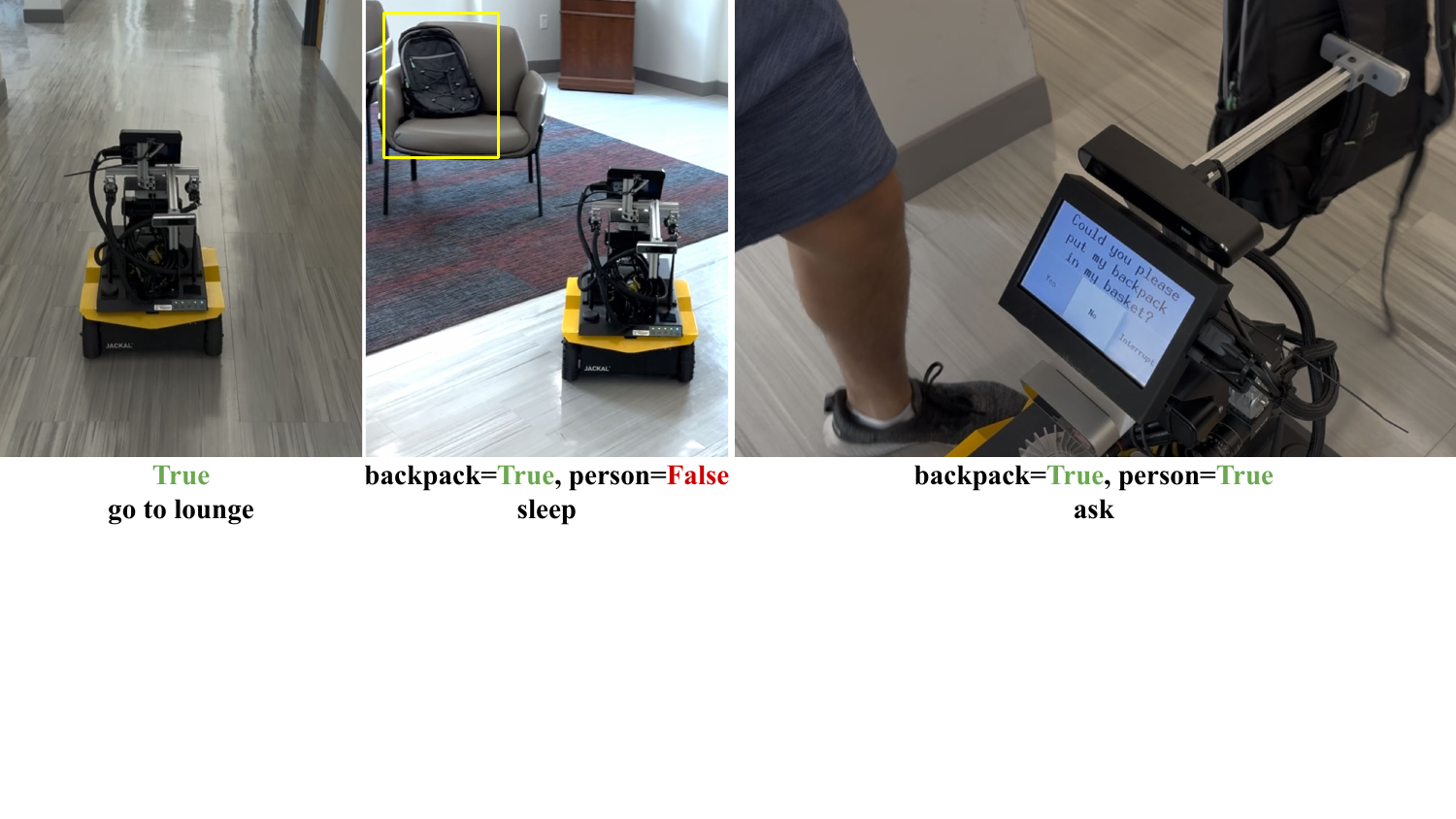}
    \caption{Execution of the program ``bring\_backpack\_2,'' which passes the safety specification.}
    \label{fig: indoor-demo}
\end{figure}

\section{Demonstration}
We first present two robot demonstrations to iterate the steps of verifying the language model-generated programs against safety specifications in Section \ref{sec: outdoor} and \ref{sec: indoor}. In the experiments, we use \emph{GPT-4o-mini} as the language model. We also demonstrate the necessity of these verification steps through two examples. 

\subsection{Outdoor Driving Task}
\label{sec: outdoor}

We first present a demonstration of a \emph{Jackal outdoor robot} (on the left of Figure \ref{fig: robots}) over a driving task. We formally define the system for this robot as follows:

$\quad S = \{\textit{pedestrian\_observed()}\}$, $E = \textit{velocity\_publisher(), stop()}$,

$\quad AP_S = \{pedestrian\}$, $AP_E = \{\textit{publish velocity, stop}\}$,

$\quad F_C(\textit{pedestrian\_observed()}) = \textit{pedestrian}$, $F_C(\textit{stop()}) = \textit{stop}$, $F_C(\textit{velocity\_publisher()}) = \textit{publish velocity}$, 

and we verify the generated programs against the specification
\begin{center}
    $\phi = \lalways ( \text{ pedestrian } \rightarrow \lnext \neg \text{ publish velocity } )$,
\end{center}
meaning that the system should never publish velocity when seeing a pedestrian ahead.

We send the sets of subscribing functions $S$ and execution functions $E$ (i.e., robot APIs) along with their textual descriptions to a language model. Then, query for a task ``turn right at a 90-degree intersection.''
By varying the random seeds of the language model, we obtain the following two responses:
\vspace{20pt}
\begin{lstlisting}[language=completion]
def turn_right_90_degrees_1():
    ......
    <prompt>if pedestrian_observed():
        stop()
    velocity_publisher(linear, angular)</prompt>
\end{lstlisting}

\begin{lstlisting}[language=completion]
def turn_right_90_degrees_2():
    ......
    <prompt>while True:
      if pedestrian_observed():
        stop()
      else:
        velocity_publisher(linear, angular)</prompt>
\end{lstlisting}

\begin{figure}[t]
    \centering
    \begin{tikzpicture}[
    scale=.7,
    node distance=2.2cm,
    thick,
    every node/.append style={transform shape},
]

\node[state,initial] (q0)
    at (0,0)
    {\shortstack{$\emptyset$}};
\node[state] (q1)
    at (0,-2)
    {\shortstack{$\emptyset$}};
\node[state] (q2)
    at (0,-4)
    {\shortstack{stop}};
\node[state] (q3)
    at (3,-2)
    {\shortstack{PV}};

\path[->,sloped]

(q0) 
edge[bend left] node[]
    { True }
    (q1)

(q1) 
edge[bend right] node[above]
    { pedestrian }
    (q2)
edge[bend right] node[below]
    { $\neg$ pedestrian }
    (q3)
    
(q2) 
edge[bend right] node[below]
    { True }
    (q3)
    
(q3) 
edge[bend right] node[below]
    { True }
    (q0)
;

\node[state,initial] (q4)
    at (5,0)
    {\shortstack{$\emptyset$}};
\node[state] (q5)
    at (5,-4)
    {\shortstack{stop}};
\node[state] (q6)
    at (8,0)
    {\shortstack{PV}};

\path[->,sloped]

(q4) 
edge[bend right] node[below]
    { pedestrian }
    (q5)

edge[bend right] node[below]
    { $\neg$ pedestrian }
    (q6)

(q5) 
edge[bend right] node[below]
    { True }
    (q4)

(q6) 
edge[bend right] node[below]
    { True }
    (q4)
;
\end{tikzpicture}
    \begin{tikzpicture}[
    scale=.7,
    node distance=2.2cm,
    thick,
    every node/.append style={transform shape},
]

\node[state,initial] (q0)
    at (0,0)
    {\shortstack{$\emptyset$}};
\node[state] (q1)
    at (0,-2)
    {\shortstack{$\emptyset$}};
\node[state] (q2)
    at (3,-2)
    {\shortstack{ask}};

\path[->,sloped]

(q0) 
edge[loop above] node[above]
    { $\neg$ backpack }
    ()
edge[bend right] node[below]
    { backpack }
    (q1)

(q1) 
edge[loop left] node[above]
    { $\neg$ person }
    ()
edge[bend right] node[above]
    { person }
    (q2)
    
(q2) 
edge[bend right] node[below]
    { True }
    (q1)
;

\node[state,initial] (q4)
    at (4,0)
    {\shortstack{$\emptyset$}};
\node[state] (q5)
    at (8,0)
    {\shortstack{ask}};

\path[->,sloped]

(q4) 
edge[loop above] node[above]
    { $\neg$ (backpack $\land$ person)}
    ()
edge[bend right] node[below]
    { backpack $\land$ person }
    (q5)

(q5) 
edge[bend right] node[below]
    { True }
    (q4)
;
\end{tikzpicture}
    \caption{We show the constructed automaton-based representations of the executable programs ``turn\_right\_90\_degrees\_1,'' ``turn\_right\_90\_degrees\_2,'' ``bring\_backpack\_1,'' and ``bring\_backpack\_2''(from left to right). PV stands for publishing velocity.
    \vspace{-10pt}}
    \label{fig: drive-automata}
\end{figure}

Then, we follow the method in Section \ref{sec: construction} to construct an automaton-based representation for each of the executable programs and present them in Figure \ref{fig: drive-automata}. For brevity, the automata we present correspond to the blue parts in the programs. The rest are variable assignments, which are irrelevant to our specification.

Next, we verify the two automata against our safety specification $\phi$. The verification results indicate that the first program fails the specification. The counterexample shows a scenario where another pedestrian is coming after the action ``stop.'' There is no double check on pedestrians before publishing velocity. Hence, this program fails the specification and may lead to safety risks during execution. We present an example of such a safety violation in Figure \ref{fig: drive-demo}. In contrast, the second program satisfies the specification and leads to a safe execution, as presented in Figure \ref{fig: drive-demo}.

This example indicates the necessity of our proposed method. The formal verification provides mathematical guarantees for the programs. Hence, we can catch all the edge cases that may violate safety specifications without an empirical study. 

\subsection{CodeBotler}
\label{sec: indoor}

The second demonstration uses the \emph{Jackal indoor robot} (the middle robot in Figure \ref{fig: robots}). The robot system is

$\quad S = \{\textit{is\_in\_room(), get\_current\_location()}\}$, $E = \{ \textit{ask(), go\_to()} \}$,

$\quad AP_S = \{\textit{person, backpack}\}$, $AP_E = \{\textit{ask, go}\}$,

$\quad F_C(\textit{is\_in\_room(``person'')}) = \textit{person}$, $F_C(\textit{is\_in\_room(``backpack'')}) = \textit{backpack}$, $F_C(\textit{ask(...)}) = \textit{ask}$, $F_C(\textit{go\_to(...)}) = \textit{go}$.

We generate programs using CodeBotler \cite{CodeBotler}---a few-shot program generator using language models---and verify the generated programs against the specification
\begin{center}
    $\phi = \lalways ( \neg (\text{ person } \land \text{ backpack }) \rightarrow \neg \text{ ask })$.
\end{center}
We require the robot to only ask for help when the backpack and the person exist.

We query the language model to generate a program for the task ``bring my backpack back from the lounge'' given the APIs in $S\cup E$. 
We show two of the generated programs in the Appendix.

\begin{wrapfigure}{r}{0.4\textwidth}
    \centering
    \resizebox{0.95\linewidth}{0.7\linewidth}{\begin{tikzpicture}
\definecolor{darkgray176}{RGB}{176,176,176}

\begin{axis}[
    x tick label style={/pgf/number format/1000 sep=},
    x grid style={darkgray176},
    xlabel={Specifications},
    xmin=-0.1, xmax=3.1,
    xticklabels={$\Phi_1$,$\Phi_2$,$\Phi_3$},
    y grid style={darkgray176},
    ylabel={Probability of Satisfying the Specification},
    ymin=0.2, ymax=1.05,
    enlargelimits=0.01,
    legend style={at={(0.5,-0.2)}, anchor=north,legend columns=-1},
    ybar interval=0.9,
]
\addplot
	coordinates {(0, 0.65) (1, 0.51) (2, 0.57) (3,0)};
\addplot 
	coordinates {(0,0.83) (1, 0.78) (2,0.66) (3,0) };
\addplot 
	coordinates {(0,0.94) (1,0.82) (2,0.84) (3,0)};
\addplot 
	coordinates {(0,1) (1,0.92) (2,0.97) (3,0) };
\addplot 
	coordinates {(0,0.9) (1, 0.90) (2, 0.95) (3,0) };

\legend{Before Refinement, Checkpoint 1, Checkpoint 2, Final, In-Context}
\end{axis}
\end{tikzpicture}}
    \caption{Probability of each specification being satisfied before and after fine-tuning the language model. Checkpoints 1, 2, and Final refer to the language model after 130, 230, and 350 epochs of fine-tuning. ``In-context'' refers to providing one in-context example in the queries to the language model, which serves as a baseline.
    \vspace{-20pt}}
    \label{fig: empirical}
\end{wrapfigure}

We construct automaton-based representations for the two programs and present them in Figure \ref{fig: drive-automata}. Then, we formally verify the two automata against the specification $\phi$. The first automaton violates the specification with a counterexample ``$\neg$ backpack $\land$ ask.'' This counterexample captures an edge case: A person takes the backpack and responds ``no,'' the robot will ask the next person to put the backpack without checking if the backpack still exists. We argue that this edge case is hard to catch by empirical experiments, but it will lead to a specification violation. We use this example to highlight the necessity of our proposed method.
In contrast, the second automaton satisfies the specification. We successfully execute the program and show the execution in Figure \ref{fig: indoor-demo}.



\section{Quantitative Analysis}

We have demonstrated the proposed method in the previous section and indicated its necessity. In this section, we conduct quantitative studies to show the probability of the language model generating safety-constrained programs. Then, we fine-tune the language model and show how much the fine-tuning procedure can improve such probability.

\subsection{Automated Refinement}

We first follow the steps in Section \ref{sec: refinement} to automatically collect fine-tuning data and use it to fine-tune the parameters of the language model. Recall that we consider the programs that pass all the specifications as the ground truth during fine-tuning. We use the system described in Section \ref{sec: outdoor} and the following specifications to fine-tune the language model:

$\phi_1 = \lalways ( \text{ pedestrian } \rightarrow \lnext \neg \text{ publish velocity } )$,

$\phi_2 = \lalways ( \neg (\text{ pedestrian } \vee \neg \text{ stop sign }) \rightarrow \lnext \neg \text{ stop } )$,

$\phi_3 = \lalways ( \text{ car }  \rightarrow \lnext \neg \text{ publish velocity }) $.

We employ the default supervised fine-tuning algorithm with negative log-likelihood loss and early stopping (at convergence) \cite{dodge2020fine}, as proposed by OpenAI \cite{liu2023gpt}. We collect 100 training samples and set the maximum number of epochs to 400. Each training sample is a (prompt, program) pair, where the prompt is a random driving task, e.g., go straight 10 meters, make a 60-degree left turn, etc.


Then, we select three checkpoints, test them over a separate set of driving tasks, and show the probability of each checkpoint generating safety-constrained programs in Figure \ref{fig: empirical}. We observe a consistent improvement in the probability of satisfying each specification during fine-tuning. The final fine-tuned model \textbf{increases such probability by over 50 percent} compared with the initial model. On the other hand, our fine-tuned model also outperforms in-context learning, in which we provide between 1 and 5 manually generated in-context examples in the input prompt. 

\begin{wrapfigure}{r}{0.45\textwidth}
    \centering
    \vspace{-20pt}
    \resizebox{0.8\linewidth}{0.7\linewidth}{\begin{tikzpicture}
\definecolor{darkgray176}{RGB}{176,176,176}

\begin{axis}[
    x tick label style={/pgf/number format/1000 sep=},
    x grid style={darkgray176},
    xlabel={Specifications},
    xmin=-0.1, xmax=4.1,
    xticklabels={$\phi_4$, $\phi_5$,$\phi_6$,$\phi_7$},
    y grid style={darkgray176},
    ylabel={Probability of Satisfying the Specification},
    ymin=0.2, ymax=1.05,
    enlargelimits=0.01,
    legend style={at={(0.5,-0.2)}, anchor=north,legend columns=-1},
    ybar interval=0.6,
]
\addplot
	coordinates {(0, 0.7) (1, 0.65) (2, 0.45) (3, 0.9) (4,0)};
\addplot 
	coordinates {(0,0.85) (1, 1) (2,0.85) (3, 0.95) (4,0)};

\legend{Before Refinement, Final (After Refinement)}
\end{axis}
\end{tikzpicture}}
    \caption{Out-of-domain test: We fine-tune the language model over the ground robot to meet $\phi_1,...,\phi_4$ and then test it over a different robot (robot dog) against specifications $\phi_5,...,\phi_7$. Over the new robot, there is an improvement in the probability of each specification being satisfied after the fine-tuning process.
    \vspace{-30pt}}
    \label{fig: out-domain}
\end{wrapfigure}

In conclusion, even in the absence of task or system knowledge, i.e., unable to provide in-context examples, our fine-tuning procedure can improve the probability of the language model generating safety-constrained programs to nearly 100 percent. In addition, this fine-tuning procedure only consumes 100 samples and \textbf{less than 5 minutes of training} on a single Nvidia A100 GPU.

By leveraging the compositional verification theorem, which ensures that any combination of verified sub-programs preserves safety, we shift the focus to fine-tuning on smaller, verifiably safe components. This modular approach reduces training time by \textbf{50 percent} while maintaining the likelihood of generating safety-compliant programs.

\subsection{Out-of-Domain Validation}
Next, we validate our fine-tuned language model over some out-of-domain autonomous systems and tasks. We validate the model via the Jackal indoor robot and Spot robot dog (see Figure \ref{fig: robots}). We have defined the system for the Jackal indoor robot in Section \ref{sec: indoor}, and the specification is

$\phi_4 = \lalways ( \neg (\text{ person } \land \text{ backpack }) \rightarrow \neg \text{ ask })$.

The system for the robot dog is

$\quad S = \{\textit{person\_observed(), target\_observed()}\}$, 

$\quad E = \{\textit{navigate(), stop(), signal()} \}$,

$\quad AP_S = \{\textit{person, target}\}$, 

$\quad AP_E = \{\textit{navigate, stop, signal}\}$,

$\quad F_C(\textit{person\_observed()}) = \textit{person}$, $F_C(\textit{stop()}) = \textit{stop}$,

$\quad F_C(\textit{target\_observed()}) = \textit{target}$, $F_C(\textit{navigate()}) = \textit{navigate}$, $F_C(\textit{signal()}) = \textit{signal}$.


The specifications for the robot dog are:

$\phi_5 = \lalways ( \text{ person } \rightarrow \lnext \neg\text{ navigate } )$,

$\phi_6 = \lalways ( \neg \text{ person } \land \text{ target } \rightarrow \lnext \neg \text{ navigate } )$,

$\phi_7 = \lalways ( \neg \text{ target } \rightarrow \lnext \neg \text{ signal } ) $.

We query the language model to generate 20 programs per task. The task for the indoor robot is ``bringing my backpack back from the lounge,'' and the task for the robot dog is ``finding the target and sending a signal.'' We compare the probability of the generated programs satisfying the specifications before and after fine-tuning. The results in Figure \ref{fig: out-domain} indicate that our fine-tuned model \textbf{improves such probability by an average of 30 percent over the out-of-domain tasks.}
Hence, our fine-tuning procedure is not restricted to the system it is fine-tuned for, and it also increases the chance of satisfying safety specifications for tasks in any robot system.

\subsection{Additional Baseline Comparison}

\begin{wrapfigure}{r}{0.6\textwidth}
    \centering
    \vspace{-45pt}
    \resizebox{\linewidth}{0.6\linewidth}{\begin{tikzpicture}
\definecolor{darkgray176}{RGB}{176,176,176}

\begin{axis}[
    x tick label style={/pgf/number format/1000 sep},
    x grid style={darkgray176},
    xlabel={Specifications},
    xmin=-0.1, xmax=7.1,
    xticklabels={$\Phi_1$,$\Phi_2$,$\Phi_3$,$\Phi_4$,$\Phi_5$,$\Phi_6$,$\Phi_7$},
    y grid style={darkgray176},
    ylabel={Probability of Satisfying the Specification},
    ymin=0.2, ymax=1.05,
    enlargelimits=0.01,
    legend style={at={(0.5,-0.3)}, anchor=north,legend columns=-1},
    ybar interval=0.7,
]

\addplot+[xshift=1.6pt]
	coordinates {(0, 0.65) (1, 0.51) (2, 0.57) (3, 0.7) (4, 0.65) (5, 0.45) (6, 0.9) (7,0)};
\addplot+[xshift=1.6pt]
	coordinates {(0, 0.55) (1, 0.45) (2, 0.6) (3, 0.6) (4, 0.55) (5, 0.5) (6, 0.75) (7,0)};
\addplot+[xshift=0.6pt]
	coordinates {(0,0.90) (1, 0.84) (2,0.80) (3, 0.78) (4, 0.92) (5, 0.78) (6, 0.88) (7,0) };
\addplot+[xshift=-0.6pt]
	coordinates {(0,0.92) (1,0.88) (2,0.92) (3, 0.84) (4, 0.86) (5, 0.86) (6, 0.94) (7,0)};
\addplot+[xshift=-1.6pt]
	coordinates {(0,1) (1,0.92) (2,0.97) (3,0.85) (4, 1) (5,0.85) (6, 0.95) (7,0) };

\legend{GPT-4o-mini, DS-R1-Qwen-7B, GPT-4o, GPT-4.1, GPT-4o-mini (refined)}
\end{axis}
\end{tikzpicture}}
    \caption{We compare our fine-tuned GPT-4o-mini with other pre-trained models. With only 100 training samples and less than an hour of training, our fine-tuned model outperforms baseline models with 20x parameter size.
    \vspace{-10pt}}
    \label{fig: model-comparison}
\end{wrapfigure}

We evaluate our fine-tuned GPT-4o-mini model against several strong pretrained baselines, including GPT-4.1, GPT-4o, GPT-4o-mini (pre-trained), and DS-R1-Qwen-7B. Despite being fine-tuned on only 100 verified programs and trained for less than one hour, our model outperforms all baselines — achieving the highest specification-satisfaction rate — while also surpassing large pre-trained models with 20 times the parameter size.

In addition to the satisfaction rate, our fine-tuned model exhibits significantly lower latency. The average response time for generating a single program is approximately half that of larger models such as GPT-4o and GPT-4.1, demonstrating the practicality of our approach for real-time applications. We present detailed results in the Appendix.

\section{Conclusion}

This work addresses a critical limitation in deploying large language models for robotic programming by introducing a verification framework that ensures compliance with safety specifications. By representing generated programs as automata and verifying them at the sub-component level, the method guarantees the correctness of both individual and composed programs. A central contribution is the compositional verification theorem, which guarantees that any combination of individually verified sub-programs will also satisfy the overall safety specifications. This theorem removes the need to verify complex programs in their entirety, significantly reducing computational overhead.

We then propose an automated fine-tuning procedure that uses verification outcomes as supervision. Due to the composition theorem, the model only needs to learn to generate safe sub-components, enabling more efficient and modular training. Empirical results validate the effectiveness of this approach, showing a 30\% increase in the generation of specification-compliant programs and a 50\% reduction in training time. Together, these contributions pave the way for safer and more scalable deployment of language models in real-world robotic systems.

As a future direction, we can 1) incorporate multimodal inputs, such as visual or sensory data, into the planning process to create richer, more context-aware plans, and 2) develop systems that allow for humans-AI collaboration in program generation, where human feedback can dynamically influence the planning process to ensure compliance with nuanced or unstructured task specifications.

\subsubsection*{Broader Impact Statement}
This work enables safer and more efficient use of language models in robot program generation by developing a method to verify generated programs against task-specific safety specifications. This work guarantees that any combination of verified sub-programs also satisfies the specifications, eliminating the need to verify complex compositions and improving computational efficiency. An automated LLM fine-tuning procedure results in a 30\% increase in specification-conformant programs while reducing training time by half.



\bibliography{references}
\bibliographystyle{tmlr}

\newpage

\appendix

\glsresetall

\section{Details of Program-Automaton Conversion}
\paragraph{Executable Program to Abstract Syntax Tree}
Recall that an executable program is a program that consists of a set of predefined keywords and grammar associated with the keywords. Given a program, we first parse it into an AST. We use an existing parsing method with built-in rules for translating programs into ASTs. We present some sample ASTs in \cref{tab: grammar-complete}.

\begin{table}[h]
\centering
\begin{tabular}{m{0.25\linewidth} m{0.3\linewidth} m{0.35\linewidth}}
\hline
AST & \gls{aut} & Note\\
\hline
\vspace{0.1cm}
\begin{tikzpicture}[thick,scale=.6, node distance=2.2cm, every node/.style={transform shape}]
	\node[initial, state] (0) at (0, 0) {root};
	\node[state] (1) at (0, -1.5) {\textbf{while}};
	\node[state] (2) at (-1, -2.5) {\Large $f_s$};
        \node[state] (3) at (1, -2.5) {\Large $f_e$};

  	\draw[->, shorten >=1pt] (0) to[right] node[below, align=center, sloped] { } (1);
        \draw[->, shorten >=1pt] (1) to[bend right] node[below, align=center, sloped] { } (2);
        \draw[->, shorten >=1pt] (1) to[bend left] node[below, align=center, sloped] { } (3);
\end{tikzpicture} & \vspace{0.2cm}\begin{tikzpicture}[thick,scale=.6, node distance=2.2cm, every node/.style={transform shape}]
	\node[initial, state] (0) at (0, 0) {\Large $\emptyset$};
	\node[state] (2) at (0, -2) {\Large $\omega$};

        \draw[->, shorten >=1pt] (0) to[bend left] node[above, align=center, sloped] {\Large $\sigma$} (2);
        \draw[->, shorten >=1pt] (2) to[bend left] node[below, align=center, sloped] {\Large $\neg \sigma$} (0);
        \draw[->, shorten >=1pt] (0) to[loop right] node[below, align=center, sloped] {\Large $\neg \sigma$} ();
        \draw[->, shorten >=1pt] (2) to[loop right] node[below, align=center, sloped] {\Large $\sigma$} ();
\end{tikzpicture} & $\sigma = F_C(f_s), \ \omega=F_C(f_e), \ f_s \in S, f_e \in E$. ``For loop'' can be expressed by ``while loop.'' \\

\hline
\vspace{0.1cm}
\begin{tikzpicture}[thick,scale=.6, node distance=2.2cm, every node/.style={transform shape}]
	\node[initial, state] (0) at (0, 0) {root};
	\node[state] (1) at (0, -1.5) {\textbf{if}};
	\node[state] (2) at (-1, -2.5) {\Large $f_s$};
        \node[state] (3) at (1, -2.5) {\Large $f_e$};

  	\draw[->, shorten >=1pt] (0) to[right] node[below, align=center, sloped] { } (1);
        \draw[->, shorten >=1pt] (1) to[bend right] node[below, align=center, sloped] { } (2);
        \draw[->, shorten >=1pt] (1) to[bend left] node[below, align=center, sloped] { } (3);
\end{tikzpicture} & \vspace{0.2cm}\begin{tikzpicture}[thick,scale=.6, node distance=2.2cm, every node/.style={transform shape}]
	\node[initial, state] (0) at (0, 0) {\Large $\emptyset$};
	\node[state] (2) at (0, -2) {\Large $\omega$};

        \draw[->, shorten >=1pt] (0) to[bend left] node[above, align=center, sloped] {\Large $\sigma$} (2);
        \draw[->, shorten >=1pt] (2) to[bend left] node[above, align=center, sloped] {\Large $True$} (0);
        \draw[->, shorten >=1pt] (0) to[loop right] node[below, align=center, sloped] {\Large $\neg \sigma$} ();
\end{tikzpicture} & $\sigma, \omega=F_C(f_s), F_C(f_e), $ $ f_s \in S, f_e \in E$. \\

\hline
\vspace{0.1cm}
\begin{tikzpicture}[thick,scale=.6, node distance=2.2cm, every node/.style={transform shape}]
	\node[initial, state] (0) at (0, 0) {root};
	\node[state] (1) at (-1, -1.5) {\textbf{if}};
	\node[state] (2) at (-1.5, -3) {\Large $f_s$};
        \node[state] (3) at (-0.5, -3) {\Large $f_{e1}$};

        \node[state] (4) at (1, -1.5) {\textbf{else}};
        \node[state] (5) at (1, -3) {\Large $f_{e2}$};

  	\draw[->, shorten >=1pt] (0) to[right] node[below, align=center, sloped] { } (1);
        \draw[->, shorten >=1pt] (0) to[right] node[below, align=center, sloped] { } (4);
        \draw[->, shorten >=1pt] (1) to[bend right] node[below, align=center, sloped] { } (2);
        \draw[->, shorten >=1pt] (1) to[bend left] node[below, align=center, sloped] { } (3);
        \draw[->, shorten >=1pt] (4) to[bend left] node[below, align=center, sloped] { } (5);
\end{tikzpicture} & \vspace{0.2cm}\begin{tikzpicture}[thick,scale=.6, node distance=2.2cm, every node/.style={transform shape}]
	\node[initial, state] (0) at (0, 0) {\Large $\emptyset$};
	\node[state] (1) at (-1, -3) {\Large $\omega_1$};
	\node[state] (2) at (1, -3) {\Large $\omega_2$};

        \draw[->, shorten >=1pt] (0) to[bend left] node[above, align=center, sloped] {\Large $\neg \sigma$} (2);
        \draw[->, shorten >=1pt] (2) to[ left] node[above, align=center, sloped] {\Large $True$} (0);
        \draw[->, shorten >=1pt] (0) to[bend right] node[above, align=center, sloped] {\Large $\sigma$} (1);
        \draw[->, shorten >=1pt] (1) to[ right] node[above, align=center, sloped] {\Large $True$} (0);
\end{tikzpicture} & $\sigma, \omega_1, \omega_2 = F_C(f_s), $ $F_C(f_{e_1}), F_C(f_{e_2})$. For ``if-elif-else,'' we duplicate the ``if'' node and replace it with ``elif.'' \\

\hline
\vspace{0.1cm}
\begin{tikzpicture}[thick,scale=.6, node distance=2.2cm, every node/.style={transform shape}]
	\node[initial, state] (0) at (0, 0) {root};
	\node[state] (1) at (0, -2) {\Large $f_{e1}$};
	\node[state] (2) at (-1, -2) {\Large $f_{e2}$};
        \node[state] (3) at (1, -2) {\Large $f_{e3}$};

  	\draw[->, shorten >=1pt] (0) to[right] node[below, align=center, sloped] { } (1);
        \draw[->, shorten >=1pt] (0) to[bend right] node[below, align=center, sloped] { } (2);
        \draw[->, shorten >=1pt] (0) to[bend left] node[below, align=center, sloped] { } (3);
\end{tikzpicture} & \vspace{0.2cm}\begin{tikzpicture}[thick,scale=.6, node distance=2.2cm, every node/.style={transform shape}]
	\node[initial, state] (0) at (0, 0) {\Large $\emptyset$};
	\node[state] (1) at (-1.5, -2) {\Large $\omega_1$};
	\node[state] (2) at (0, -2) {\Large $\omega_2$};
        \node[state] (3) at (1.5, -2) {\Large $\omega_3$};

        \draw[->, shorten >=1pt] (0) to[bend right] node[below, align=center, sloped] {\Large $True$} (1);
        \draw[->, shorten >=1pt] (1) to[bend right] node[below, align=center, sloped] {\Large $True$} (2);
        \draw[->, shorten >=1pt] (2) to[bend right] node[below, align=center, sloped] {\Large $True$} (3);
        \draw[->, shorten >=1pt] (3) to[bend right] node[below, align=center, sloped] {\Large $True$} (0);
\end{tikzpicture} & Running a set of functions sequentially without keywords. $\omega_i=F_C(f_{ei})$ for $i\in [1,2,3]$. We can extend it to any number of leaf nodes.\\
\hline
\vspace{0.1cm}
\end{tabular}
\caption{
    Rules to convert abstract syntax trees to \gls{aut}-based representations. 
}
\label{tab: grammar-complete}
\vspace{-20pt}
\end{table}

An AST has a set of \emph{tree nodes} and a set of direct transitions between the tree nodes. Each tree node corresponds to a keyword or a function $f \in S\cup E$. A tree node has at most one incoming transition and a set of outgoing transitions connecting to a set of \emph{children} tree nodes. \emph{Root} is a tree node that does not belong to the children of any node, and \emph{leaf} is a tree node whose children are empty.

\paragraph{Keyword Processor}
The keyword processor is a function mapping an AST with predefined keywords and specified structures to an FSA. It has a set of built-in rules for mapping an AST to an FSA, and we present some sample rules in \cref{tab: grammar}. The keyword processor cannot handle AST structures beyond the domain of built-in rules.

\paragraph{Tree to \gls{aut}}
So far, we have the AST for the program and the keyword processor, so we can run \cref{algo:tree2fsa} to construct an FSA representing the program. First, the algorithm initializes the states, transitions, and labels of an FSA (lines 2-4). Next, it follows a preorder traversal to go through all the tree nodes in the AST (line 9), and it uses the keyword processor to build sub-automata based on the keywords (lines 7). Then, it merges the sub-automata and returns the merged automaton as the final output (lines 11-15).

\newpage
\section{Additional Experimental Details}

We present the programs generated by the language model for the task ``bring my backpack back from the lounge'' below.
\vspace{10pt}
\begin{lstlisting}[language=completion]
def bring_backpack()_1:
    start_loc = get_current_location()
    go_to("lounge")
    <prompt>if is_in_room("backpack"):
        while True:
            if is_in_room("person"):
                response = ask("Could you put my backpack in the basket?")</prompt>
                if response == "Yes":
                    break
            time.sleep(1)
    go_to(start_loc)
\end{lstlisting}

\begin{lstlisting}[language=completion]
def bring_backpack_2():
    start_loc = get_current_location()
    go_to("lounge")
    while True:
        <prompt>if is_in_room("backpack") and is_in_room("person"):
            response = ask(...)</prompt>
            if response == "Yes":
                go_to(start_loc)
                return
        if not is_in_room("backpack"):
            go_to(start_loc)
            return
        time.sleep(1)
\end{lstlisting}

We record the average time for generating a program using each baseline models in \cref{tab: latency}. The response (program generation) time for GPT-4o-mini and DeepSeek-R1-Qwen-7B is half of the two large models (GPT-4.1 and GPT-4o).

\begin{table}[H]
    \centering
    \begin{tabular}{||c|c|c|c|c||}
    \hline
        Model & GPT-4.1 & GPT-4o & GPT-4o-mini & DeepSeek-R1-Qwen-7B \\
        \hline
        Response Time (seconds) & 11.7 & 12.1 & 7.4 & 5.6 \\
    \hline
    \end{tabular}
    \caption{The average time for each model to generate a complete response (program). Note that we query the online models, hence the recorded time includes internet latency. Given that the internet latency is constant, the total response time of GPT-4o-mini and DeepSeek-R1-Qwen-7B is significantly shorter than that of the former two large models.}
    \label{tab: latency}
\end{table}

\newpage
\section{Composed Program Execution}
\label{sec: demo-compose}

Consider we obtain a set of safety-constrained programs for the Jackal outdoor robot by repeating the steps in Section \ref{sec: indoor}. The programs include basic driving behaviors such as going straight to approach the stop sign (left of Figure \ref{fig: compose-example}), turning left/right (bottom right of Figure \ref{fig: compose-example}), U-turn (top right of Figure \ref{fig: compose-example}), etc. We compose them into a complex, long-horizon driving task: 
\begin{center}
    ``Always turn right at the stop sign and then go straight, 
    
    and make a U-turn if you reach the end of the road.''
\end{center}

In Section \ref{sec: composition}, we prove that the composed program from multiple safety-constrained programs also satisfies the safety specifications. We empirically test the composed programs using the outdoor robot. It satisfies the safety specifications during the entire execution. 

Theorem \ref{thm: safe} allows users or planners to guarantee the composed program satisfies the safety specifications without additional verification to the overall composed program. As the state space of the composed program can be large compared to each individual sub-program, avoiding verifying the overall program significantly reduces the computational complexity.

\begin{figure}[t]
    \centering
    \includegraphics[width=0.6\linewidth]{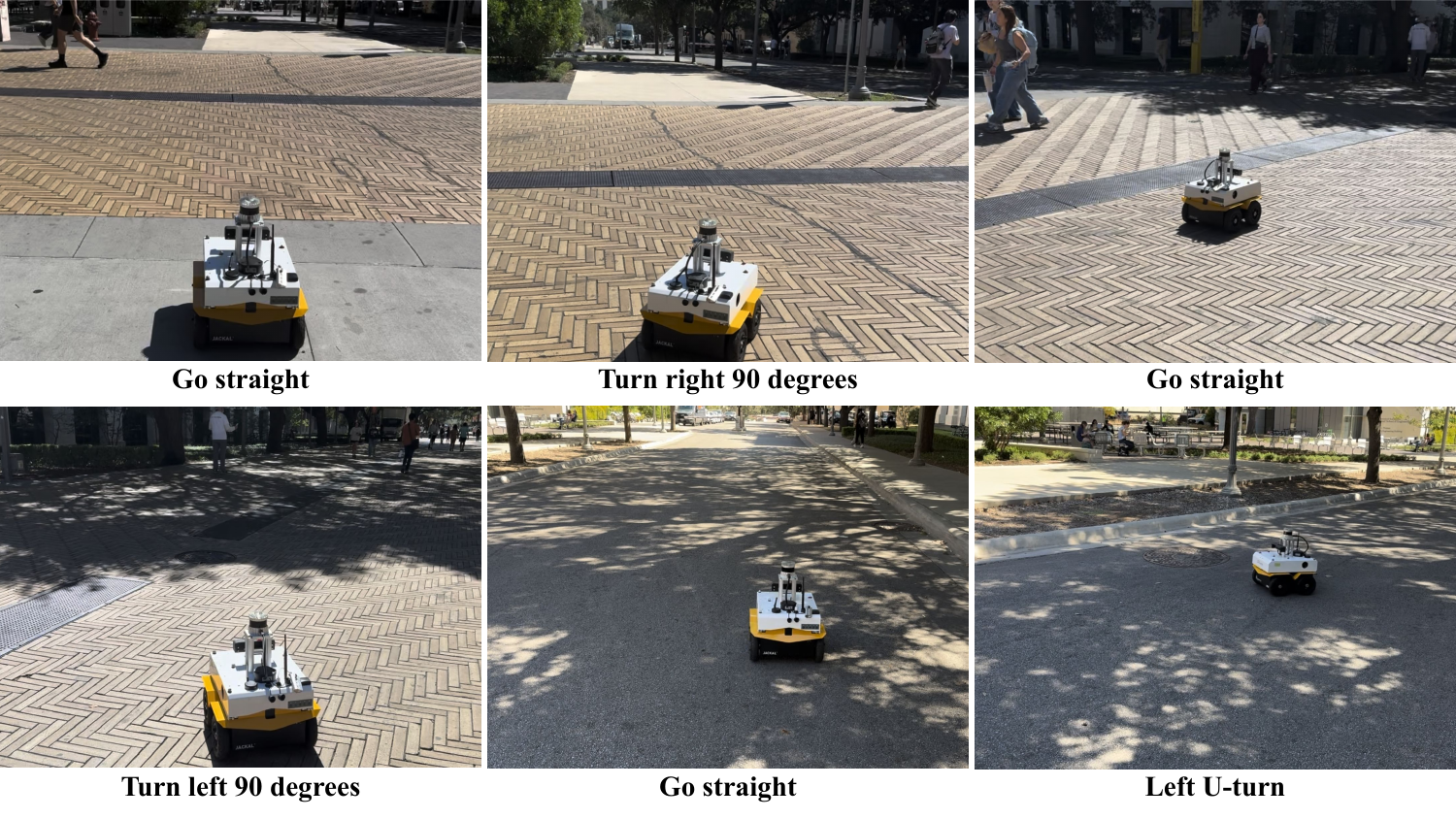}
    \caption{Execution of a composed program that consists of multiple sub-programs. Each sub-program (e.g., go straight, turn left 90 degrees) is formally verified and satisfies the specifications.}
    \label{fig: compose-demo}
\end{figure}

\end{document}